\documentclass[sigconf,nonacm,pbalance]{acmart}
\AtBeginDocument{%
  }

\newcommand{\mathbbm}[1]{\text{\usefont{U}{DSSerif}{m}{n}#1}}
\usepackage{bm}
\usepackage{thmtools}
\usepackage{mathtools}
\usepackage[capitalise,noabbrev]{cleveref}
\crefformat{equation}{(#2#1#3)}
\usepackage{soul}
\usepackage{autonum}
\usepackage{stackrel}
\usepackage{subcaption}
\usepackage{tikz}
\usepackage{enumitem}

\newcommand{\tran}{{\mathrm{T}}}

\newcommand{\ifracB}[2]{{{#1}/{(#2)}}}

\newcommand{\ind}{\mathbbm{1}}

\makeatletter
\newcommand{\biggg}{\bBigg@{4}}
\newcommand{\Biggg}{\bBigg@{5}}
\makeatother

\newcommand{\circled}[1]{
    \tikz[baseline=(char.base)]{\node[shape=circle, draw, inner sep=0pt,minimum size=9pt] (char) {#1};}
}

\newcommand{\tcircled}[1]{
    \tikz[baseline=(char.base)]{\node[shape=circle, draw, inner sep=0pt,minimum size=6pt] (char) {\scriptsize #1};}
}

\newcounter{relctr} %
\everydisplay\expandafter{\the\everydisplay\setcounter{relctr}{0}} %

\newcommand\labelrel[2]{%
  \begingroup
    \refstepcounter{relctr}%
    \stackrel{\smash{\textnormal{(\alph{relctr})}}}{#1}%
    \originallabel{#2}%
  \endgroup
}

\AtBeginDocument{\let\originallabel\label} %
\crefformat{relctr}{(#2#1#3)}
\allowdisplaybreaks
\begin{document}

\title{Entrywise Error Bounds for Spectral Ranking with Semi-Random Adversaries}

\author{Dongmin Lee}
\authornote{The author ordering is alphabetical.}
\email{lee4818@purdue.edu}
\orcid{0009-0006-7962-2305}
\affiliation{%
  \department{Department of Computer Science}
  \institution{Purdue University}
  \city{West Lafayette}
  \state{IN}
  \country{USA}
}
\author{Anuran Makur}
\email{amakur@purdue.edu}
\orcid{0000-0002-2978-8116}
\affiliation{%
  \department{Department of Computer Science}
  \department{Elmore Family School of Electrical and Computer Engineering}
  \institution{Purdue University}
  \city{West Lafayette}
  \state{IN}
  \country{USA}
}
\author{Japneet Singh}
\email{sing1041@purdue.edu}
\orcid{0000-0002-4953-1465}
\affiliation{%
  \department{Elmore Family School of Electrical and Computer Engineering}
  \institution{Purdue University}
  \city{West Lafayette}
  \state{IN}
  \country{USA}
}

\renewcommand{\shortauthors}{Lee, Makur, and Singh}

\begin{abstract}
Bradley-Terry-Luce (BTL) model estimation is a well-established strategy to rank a collection of items given a dataset of pairwise comparisons. Although the theoretical performance of BTL estimation methods, such as spectral and maximum likelihood estimation, is well studied in the regime of uniformly sampled graphs, generalizing such results to a wider class of random graphs has proved challenging. In this work, we investigate the entry-wise error of spectral algorithms against a semi-random adversary that can arbitrarily boost the sampling probabilities of certain edges. We find that the performance of the unweighted spectral method is heavily dependent on the spectral properties of the generated graph. Furthermore, we show that asymptotic performance approaching that of uniformly sampled graphs can be recovered by appropriately reweighting the observed edges to counteract the adversary and restore the spectral gap. Finally, we provide numerical simulations that support our theoretical findings.
\end{abstract}

\keywords{Preference learning, Bradley-Terry-Luce model estimation, Semi-random adversary} %

\maketitle

\section{Introduction}
\emph{Pairwise preference learning} is the classic problem of estimating preferences for a collection of $n$ items given a set of pairwise comparisons. The problem has many applications including sports rankings \cite{Elo1978}, recommendation systems \cite{BaltrunasMakcinskasRicci2010}, and search engine algorithms \cite{PageBrinMotwaniWinograd1999}. Recently, the problem has also attracted significant attention in the machine learning community as a practical approach to align large language model (LLM) outputs with human feedback (see, e.g., \cite{ChristianoLeikeBrown2017,OuyangWuJiang2022}). A popular model for this problem is the Bradley-Terry-Luce (BTL) model \cite{BradleyTerry1952,Luce1959,Plackett1975,Zermelo1929}, which assumes that each item $i$ is associated with a latent score $\alpha_i > 0$ such that the probability of item $i$ being preferred over item $j$ is $\ifracB{\alpha_i}{\alpha_i+\alpha_j}$.
There are two broad paradigms to estimate the score vector 
$\bm{\alpha}=(\alpha_1,\dots,\alpha_n)$ from partially observed pairwise comparison graphs: maximum likelihood estimation (MLE) \cite{Ford1957,Hunter2004,Shahetal2016} and spectral algorithms \cite{NegahbanOhShah2017, JadbabaieMakurShah2020,SpectralonGeneralMultiway,JadbabaieMakurShah2024}. Under well-behaved settings, such as Erd\H{o}s-R\'{e}nyi comparison graphs, it is known that both methods are optimal in terms of sample complexity \cite{Chenetal2019}.

However, assuming each edge to be independent and identically distributed might not be a realistic assumption, as real-life data is often clustered and non-uniform. For example, when ranking chess players, there might be more matches observed between players who live in the same country. Unfortunately, the best known analyses of BTL estimation on Erd\H{o}s-R\'{e}nyi graphs do not easily generalize to other graph models (see, e.g., \cite{Shahetal2016} which studies deterministic graphs), because they critically rely on the superior uniformity of such random graphs, such as the concentration of node degrees, in order to derive the tightest bound possible. In the regime of general deterministic graphs where such properties cannot be guaranteed, obtaining a tight bound that holds for all graphs is difficult. Indeed, known results in the general setting such as \cite{RankingGeneralGraphsLocality,LiShrotriyaRinaldo2022} fail to bound the $\ell^\infty$ error as tightly as possible for uniformly sampled graphs.

Thus, it is natural to seek a middle ground between the restrictiveness of uniformly sampled graphs and the intractability of arbitrary deterministic graphs. To this end, we consider the \emph{semi-random adversary} model, where an adversary can (with some restrictions) perturb an initially uniform sample. The semi-random adversary model has found widespread acceptance in many different areas such as graph theory (especially graph coloring and partitioning) \cite{BlumSpencer1995,FeigeKilian2001,MakarychevMakarychevVijayaraghavan2012,AsilisChenHansesn2026}, matrix completion \cite{ChengGe2018,KelnerLiLiu2023,GaoCheng2023}, and preconditioning \cite{JambulapatiLiMusco2021}. Recently, \cite{MonotoneAdversary} investigated the effectiveness of the MLE method for BTL estimation on semi-random graphs. However, to the best of our knowledge, such an analysis has not been established for the second major paradigm of BTL estimation\textemdash the spectral method. \emph{In this work, we address this direction by analyzing the theoretical performance of the spectral method against a semi-random adversary}.

In our semi-random adversary model, the adversary is allowed to choose a different sampling probability for each possible edge of the graph, as long as it is not smaller than some base probability $p$. This model encompasses important random graph models such as the stochastic block model (SBM) \cite{HollandBlackmondLeinhardt1983} that better represent real-world data than standard Erd\H{o}s-R\'{e}nyi models. Since a graph generated by such an adversary appears to contain no less information than an equivalent Erd\H{o}s-R\'{e}nyi graph with the same $p$, it may seem that the adversary's actions can only benefit us. However, this is not true; it is a well-known phenomenon that adding edges to a graph can paradoxically hurt the spectral properties that spectral methods rely on. For example, as \cref{fig:paradox} illustrates, adding an edge to a graph can, counter-intuitively, reduce the second smallest eigenvalue of its normalized Laplacian (also called the spectral gap) \cite{Fiedler1973}.
Indeed, \cite{EldanRaczSchramm2017} demonstrated that the addition of a random edge to an Erd\H{o}s-R\'{e}nyi graph decreases the spectral gap with positive probability.

Our theoretical analysis focuses on the entry-wise, or $\ell^\infty$, loss, as it is the most relevant metric for many important applications, such as top-$K$ ranking \cite{ChenSuh2015,Chenetal2019,MonotoneAdversary} and hypothesis testing \cite{LagrangianInference,MakurSingh2025,MakurSingh2025a}. We begin by studying the performance of the vanilla spectral method (rank centrality \cite{NegahbanOhShah2017}) against a semi-random adversary and find that the same entry-wise error bounds as those of uniform edge sampling can be obtained, provided that the spectral gap of the semi-randomly generated graph can be lower bounded by some constant with high probability. Furthermore, we highlight a certain class of stochastic block models (SBMs) as an example of a non-Erd\H{o}s-R\'{e}nyi random graph that satisfies the spectral gap condition.

In addition, to overcome the spectral gap requirement, we 
take inspiration from \cite{MonotoneAdversary} to 
introduce an edge-weighted variant of the spectral method. Intuitively, the purpose of the reweighting is to undo the adversary's efforts to weaken the graph's spectral gap. We argue that such a reweighting must be possible by invoking a monotone coupling argument, showing that there must exist a subgraph of a semi-randomly generated graph that recovers the spectral properties of an Erd\H{o}s-R\'{e}nyi graph, then noting that edge reweighting is the continuous relaxation of taking a subgraph. We first present a generic error bound of the weighted spectral method that holds for any weighting procedure. Next, under the reweighting scheme proposed in \cite{MonotoneAdversary}, we show that weighted spectral ranking of semi-random graphs can maintain error bounds consistent with Erd\H{o}s-R\'{e}nyi graphs when the graph is sufficiently dense. We conclude with numerical simulations to support our theoretical results.

\begin{figure}[t]
    \centering
    
    \begin{tikzpicture}[baseline,every node/.style={circle,draw=black,line width=1pt,minimum size=3pt,inner sep=0,outer sep=0}]
        \node at (0:1) (a0) {};
        \node at (72:1) (a1) {};
        \node at (144:1) (a2) {};
        \node at (216:1) (a3) {};
        \node at (288:1) (a4) {};

        \draw (a0) -- (a4);
        \draw (a0) -- (a1);
        \draw (a1) -- (a2);
        \draw (a1) -- (a3);

        \node[rectangle,draw=none] at (270:1.5) (text) {$\lambda_{n-1}(L^{\mathsf{sym}})\approx 0.423$};
    \end{tikzpicture}\hspace{1em} $\to$ \hspace{1em}
    \begin{tikzpicture}[baseline,every node/.style={circle,draw=black,line width=1pt,minimum size=3pt,inner sep=0,outer sep=0}]
        \node at (0:1) (a0) {};
        \node at (72:1) (a1) {};
        \node at (144:1) (a2) {};
        \node at (216:1) (a3) {};
        \node at (288:1) (a4) {};

        \draw (a0) -- (a4);
        \draw (a0) -- (a1);
        \draw (a1) -- (a2);
        \draw (a1) -- (a3);
        \draw[red] (a2)--(a3);

        \node[rectangle,draw=none] at (270:1.5) (text) {$\lambda_{n-1}(L^{\mathsf{sym}})\approx \color{red}0.346$};
    \end{tikzpicture}
    \caption{A visualization of how adding an edge to a graph can paradoxically reduce its spectral gap, $\lambda_{n-1}(L^{\mathsf{sym}})$ (the second smallest eigenvalue of the normalized Laplacian).}
    \label{fig:paradox}
    \Description[Left: five-vertex graph, right: same graph with an added edge]{Left: a graph with five vertices and four edges: $(1, 2)$, $(1, 3)$, $(1, 4)$, and $(4, 5)$. Its spectral gap is $0.423$. Right: the same graph with the edge $(2, 3)$ added. Its spectral gap is now $0.346$.}
\end{figure}

\subsection{Contributions}
We summarize our contributions as follows.
\begin{itemize}[leftmargin=*]
    \item We obtain entry-wise error bounds for BTL estimation with the unweighted spectral method on semi-randomly generated graphs (\cref{thm:entry-wise-error-bounds-for-unweighted-rank-centrality}).
    \begin{itemize}
        \item Notably, we recover bounds consistent with \cite{Chenetal2019} when the spectral bound of the canonical Markov matrix is lower bounded by a nonzero constant with high probability, as is the case for Erd\H{o}s-R\'{e}nyi graphs (\cref{cor:error-bound-under-erdos-renyi-graphs}).
        \item Furthermore, we generalize this result to stochastic block models, showing that certain classes of SBMs preserve the spectral gap and thus allow error bounds comparable to those of Erd\H{o}s-R\'{e}nyi graphs to be attained (\cref{prop:error-bound-for-stochastic-block-model}).
    \end{itemize}
    \item We describe and analyze a variant of the spectral method that reweights edges to improve the spectral properties of the stochastic matrix in \cref{subsec:weighted-algorithm}.
    \begin{itemize}
        \item We first obtain error bounds for the generic case, where no assumptions are made about the graph or reweighting procedure (\cref{thm:error-bounds-for-weighted-rank-centrality}).
        \item Next, we derive error bounds for semi-randomly generated graphs using a specific reweighting procedure 
        (\cref{thm:error-bounds-for-weighted-rank-centrality-with-mmwu}).
    \end{itemize}
    \item We present empirical results that demonstrate how our weighted spectral method can counteract the effect of a semi-random adversary that chooses edge weights to degrade the spectral gap (\cref{sec:experiments}). However, we emphasize that our work is primarily theoretical in nature.
\end{itemize}
\subsection{Related Work}
The BTL model \cite{BradleyTerry1952,Luce1959,Plackett1975,Zermelo1929} is perhaps one of the most widely adopted framework to model outcomes of pairwise comparisons across a diverse range of domains, including sports \cite{barry1993choice,Massey1997,cattelan2013dynamic,JadbabaieMakurShah2020}, psychology \cite{Matthews2018pain}, animal behavior \cite{Adams2005}, and ranking scientific journals \cite{Stigler1994}. Recently, it has also found use in the machine learning community for tasks such as reinforcement learning from human feedback (RLHF) \cite{ChristianoLeikeBrown2017,OuyangWuJiang2022} and ranking LLMs \cite{Chiangetal2024}. Often the objective is to estimate the parameters of the BTL model (in order to find a ranking or determine the top $k$ items), and many techniques have been developed to tackle this problem, most notably MLE methods \cite{Ford1957,Hunter2004,Shahetal2016,Zermelo1929,SimonsYao1999} and spectral methods \cite{NegahbanOhShah2017, JadbabaieMakurShah2020,SpectralonGeneralMultiway} (and hybrids of the two such as \cite{ChenSuh2015}), but also least-squares \cite{HendrickxOlshevskySaligrama2020}, non-parametric \cite{BongLiShrotriyaRinaldo2020,Chatterjee2015}, and Bayesian \cite{GuiverSnelson2009,Adams2005,CaronDoucet2012} approaches.

There exists a large body of work that focuses on finding theoretical error bounds for these BTL estimation methods under certain graph configurations, such as round-robin tournaments (i.e., complete graphs) \cite{SimonsYao1999} and general fixed graphs \cite{NegahbanOhShah2017,LiShrotriyaRinaldo2022,SpectralonGeneralMultiway,RankingGeneralGraphsLocality}. In particular, the Erd\H{o}s-R\'{e}nyi model has attracted significant attention thanks to its simplicity and favorable statistical properties. For example, \cite{NegahbanOhShah2017} bounds the $\ell^2$ error of the spectral method on Erd\H{o}s-R\'{e}nyi random graphs and general graphs, while \cite{Chenetal2019,ChenGaoZhang2022} refine the former's results and also bound the $\ell^\infty$ errors of the spectral and MLE methods, finding both to be asymptotically optimal (though with differing leading constants). On the other hand,  \cite{UncertaintyQuantificationBTL} finds non-asymptotic entry-wise error bounds for the spectral and MLE methods under the Erd\H{o}s-R\'{e}nyi model and also presents some results for heterogeneously sampled graphs (which bears some similarities to our semi-random model, but unlike our model, requires edge sampling probabilities to have bounded dynamic range). Meanwhile, works such as \cite{SpectralonGeneralMultiway,UncertaintyQuantificationMultiway,JangKimSuh2018} analyze the error bounds of estimators for the Plackett-Luce (PL) model, which is a generalization of the BTL model for multi-way comparisons.  

\cite{MonotoneAdversary} introduced the semi-random adversary model in the context of BTL estimation and demonstrated that a weighted MLE method can achieve similar estimation guarantees to those of uniform sampling against a semi-random adversary. We remark that the use of reweighting as a technique to counteract a semi-random adversary has been studied in other contexts as well, such as matrix completion \cite{ChengGe2018} and linear regression \cite{JambulapatiLiMusco2021}.

\subsection{Notation}
We use the following notational conventions throughout this work. We represent vectors with lowercase bold letters and matrices with uppercase letters. Let $[n]=\{1,2,\dots,n\}$. $\|\cdot \|_p$ represents the $\ell^p$ norm for vectors and the corresponding induced norm for matrices. Given a vector $\bm{\pi}$, we define the vector norm $\|\bm{x}\|_{\bm{\pi}} \triangleq (\sum_{i=1}^n \pi_i x_i^2)^{1/2}$ and the corresponding left operator norm $\|A\|_{\bm{\pi}} \triangleq \sup_{\|\bm{x}\|_{\bm{\pi}}=1} \|\bm{x}^\tran A\|_{\bm{\pi}}$. Given a matrix $A$, $\lambda_i(A)$ refers to the $i$th largest eigenvalue of $A$ in absolute value. We use standard Bachmann-Landau asymptotic notation, e.g., $f(n)=O(g(n))$ if there exists some constant $M>0$ such that $|f(n)| \leq M g(n)$ for all $n \geq 1$. We use the term ``with high probability'' to describe events that occur with probability at least $1-O(n^{-t})$ for some $t\geq 1$.

\section{Problem Formulation}
\subsection{Bradley-Terry-Luce Model}
We consider a set of $n$ items, each labeled $i\in[n]$. Under the BTL model, we assume that each item $i$ is associated with an underlying score (or skill) parameter $\alpha_i>0$ such that the probability of item $j$ being preferred over $i$, $p_{ij}$, is given as
\begin{equation}
    \forall (i, j) \in [n]^2 \text{ such that } i \neq j,\enspace p_{ij} \triangleq \frac{\alpha_j}{\alpha_i+\alpha_j}.
\end{equation}
We assume that the \emph{dynamic range} of the scores is upper bounded by a constant $h$, i.e.,
\begin{equation}
    \frac{\max_{i \in [n]} \alpha_i}{\min_{i \in [n]} \alpha_i} \leq h. \label{eq:dynamic-range}
\end{equation}
Note that this implies
\begin{equation}
    \forall (i, j) \in [n]^2 \text{ such that } i\neq j,\enspace \frac{1}{1+h} \leq p_{ij} \leq \frac{h}{1+h}.
\end{equation}

\subsection{Semi-Random Observation Graph Model}
Let $\mathcal{G}=(\mathcal{V}, \mathcal{E})$ be the (undirected) observation graph that encodes observed comparisons, i.e., $(i, j)\in \mathcal{E}$ and $(j, i)\in \mathcal{E}$ if and only if items $i$ and $j$ are compared together. Given a parameter $p \in [0, 1]$, a \emph{semi-random adversary} generates the observation graph $\mathcal{G}$ by picking each edge $(i, j)$ with $i<j$ independently with probability $q_{ij}\in[p, 1]$. Note that if $p$ satisfies $np \geq c_0 \log(n)$ for some fixed $c_0 >1$, $\mathcal{G}$ is connected with high probability.
Let $d_\text{min}$ and $d_\text{max}$ be the minimum and maximum degree of the graph respectively.

Let $k$ be the number of observed comparisons for each pair $(i, j) \in \mathcal{E}$ with $i<j$. For simplicity, we assume that the number of comparisons is identical for every pair; our analysis can be extended to the case where the number of comparisons varies across pairs. The observed outcomes can be modeled as a sequence of independent Bernoulli random variables
\begin{equation}
Z_{ij}^{(m)} \sim \mathsf{Bernoulli}(p_{ij})
\end{equation} for all $m \in [k]$ and $(i, j)\in\mathcal{E}$ (with $i<j$), such that $\smash{Z_{ij}^{(m)}}=1$ if $j$ beat $i$ in the $m$-th comparison and $\smash{Z_{ij}^{(m)}}=0$ otherwise. Let
\begin{equation}
Z_{ij} \triangleq \sum_{m=1}^k Z_{ij}^{(m)}
\end{equation}
denote the total number of times $j$ beats $i$. Then,
\begin{equation}\hat{p}_{ij} \triangleq \frac{Z_{ij}}{k}\end{equation}
is the empirical probability of $j$ beating $i$.

Next, to provide intuition, we provide a simple example of a situation where the observation graph will follow the semi-random adversary model. Suppose that one compiles a database of pairwise comparisons between uniformly sampled items in some set $A$. In the middle of the data collection process, the scope of the experiment is expanded, and the set of items is updated to some superset of the original set, $A'\supset A$. As a result, every potential pair in $\{(i,j)\in {A'}^2: i<j\}$ has some base probability $p>0$ of being observed, but pairs in $A^2$ will have a higher probability of being observed and thus will be overrepresented in the dataset. Therefore, this dataset follows the semi-random observation graph model. As this example demonstrates, the semi-random adversary in our model is not necessarily an ``attacker'' but a theoretical construct that is useful to characterize the worst-case behavior of a wider class of random graphs that simpler models like the Erd\H{o}s-R\'{e}nyi model cannot capture.

\subsection{Spectral BTL Estimation}
The goal of BTL estimation algorithms is to estimate the underlying score vector $\bm{\alpha}$ given observed comparison outcomes. Since the score vector is scale-invariant (i.e., multiplying $\bm{\alpha}$ by any nonzero constant does not affect the induced probabilities $p_{ij}$), we normalize $\bm{\alpha}$ to obtain the canonical score vector to be estimated,
\begin{equation}
\bm{\pi} = \frac{1}{\sum_{i=1}^n \alpha_i} \begin{bmatrix}
    \alpha_1 & \cdots & \alpha_n
\end{bmatrix}.\label{eq:pi}
\end{equation}

Note that the dynamic range assumption in \cref{eq:dynamic-range} implies that
\begin{equation}
    \forall i \in [n],\enspace \frac{1}{nh} \leq \pi_i \leq \frac{h}{n},
\end{equation}
thus
\begin{alignat}{3}
    &&\frac{1}{h\sqrt{n}}\leq\frac{\sqrt{n}}{h} \|\bm{\pi}\|_\infty &\leq \|\bm{\pi}\|_2 &&\leq \sqrt{n} \|\bm{\pi}\|_\infty \leq \frac{h}{\sqrt{n}},\label{eq:pi-norm-relation}\\
\forall \bm{v} &\in \mathbb{R}^n,&\sqrt{\frac{1}{nh}} \|\bm{v}\|_2 &\leq \|\bm{v}\|_{\bm{\pi}} &&\leq \sqrt{\frac{h}{n}} \|\bm{v}\|_2,\\
    \forall A &\in \mathbb{R}^{n\times n},&\frac{1}{h} \|A\|_2 &\leq \|A\|_{\bm{\pi}} &&\leq h \|A\|_2.
\end{alignat}

Although there are several methods to estimate $\bm{\pi}$, the main focus of this work is the spectral method (also called rank centrality) \cite{NegahbanOhShah2017}.
Given $\mathcal{G}$ and the true probabilities $p_{ij}$, we can define the canonical Markov matrix $S \in [0, 1]^{n \times n}$ as
\begin{equation}
    S_{ij} \triangleq \begin{cases}
        \frac{p_{ij}}{d}, & \text{if } (i, j) \in \mathcal{E}, \\
        1 - \frac{1}{d} \sum_{l: (i, l) \in \mathcal{E}} p_{il}, & \text{if } i = j, \\
        0, & \text{otherwise,}
    \end{cases}\label{eq:S-definition}
\end{equation}
where $d=d_\text{max}$. Under the assumption that $\mathcal{G}$ is connected, the Markov chain is aperiodic (since $S_{ii} > 0$ for all $i\in[n]$), thus $S$ has a unique stationary distribution that can be shown to equal $\bm{\pi}$ \cite{LevinPeresWilmer2009}.

Motivated by this property, we can similarly define the empirical Markov matrix $\hat{S}$ using the empirical probabilities $\hat{p}_{ij}$ derived from the observations.
\begin{equation}
    \hat{S}_{ij} \triangleq \begin{cases}
        \frac{\hat{p}_{ij}}{d}, & \text{if } (i, j) \in \mathcal{E}, \\
        1 - \frac{1}{d} \sum_{l: (i, l) \in \mathcal{E}} \hat{p}_{il}, & \text{if } i = j, \\
        0, & \text{otherwise.}
    \end{cases}
\end{equation}
Since $\hat{S}$ will converge to $S$ as $k \to \infty$, the spectral method obtains an approximation of $\bm{\pi}$ (the normalized score vector) computes the stationary distribution $\hat{\bm{\pi}}$ of $\hat{S}$ (i.e., its principal eigenvector).

\section{Main Results on Spectral Method}

\subsection{Entry-Wise Error Bounds}
In this section, we establish entry-wise error bounds for the (unmodified) rank centrality algorithm when applied to an observation graph generated by a semi-random adversary. We show that the vanilla rank centrality algorithm (without modifications such as reweighting or trimming) maintains its entry-wise error bounds under a semi-random adversary as long as the spectral gap $1 - \lambda_2(S)$ of the canonical Markov matrix $S$ is sufficiently preserved, as the following theorem highlights.
\begin{theorem}[Entry-Wise Error Bounds for Unweighted Spectral Method]\label{thm:entry-wise-error-bounds-for-unweighted-rank-centrality}
Assume that the observation graph $\mathcal{G}$ is generated by a semi-random model with edge probabilities $q_{ij} \in [p, 1]$ for some constant $p \in (0, 1]$.
Let $S$ be the canonical Markov matrix of $\mathcal{G}$ as defined in \cref{eq:S-definition}.
Suppose that 
\begin{equation}
\forall i \in [n],\enspace n\sum_{j: j\neq i} q_{ij}^2 \leq s \left(\sum_{j:j\neq i} q_{ij}\right)^2 \label{eq:norm-ratio-bound-assumption}
\end{equation}
holds for some fixed constant $s > 1$, and that the spectral gap condition
\begin{equation}
    \mathbb{P}(1-|\lambda_2(S)|\leq \gamma) \leq \frac{1}{n^5} \label{eq:spectral-gap-assumption}
\end{equation}
holds for some constant $0<\gamma<1$.
Then, there exist constants $c_0\geq 10240h^2/\gamma^2$,
$c_1 > 0$, and $c_2>0$ such that for all $p,k$ satisfying $np \geq c_0 \log(n)$ and $k \geq 5$, the approximate score vector $\hat{\bm{\pi}}$ estimated by the spectral method satisfies
\begin{equation}
    \frac{\|\hat{\bm{\pi}}-\bm{\pi}\|_\infty}{\|\bm{\pi}\|_\infty} \leq \left(\frac{c_1}{\gamma}+c_2\right) \sqrt{\frac{\log(n)}{npk}} 
\end{equation}
for sufficiently large $n$ with probability at least $1-O(n^{-5})$.
\end{theorem}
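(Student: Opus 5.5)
We follow the leave-one-out framework of \cite{Chenetal2019}, tracking where Erd\H{o}s-R\'{e}nyi uniformity was used and replacing it with the semi-random assumptions. All arguments are conditional on a high-probability event $\mathcal{A}$ on which: (i) $\mathcal{G}$ is connected; (ii) by Chernoff bounds with $np\ge c_0\log n$, every degree satisfies $d_i = (1\pm\tfrac12)\sum_{j\ne i}q_{ij}$; (iii) $1-|\lambda_2(S)|>\gamma$, by \cref{eq:spectral-gap-assumption}.

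The $\ell^2$ step comes first. We write $\hat{\bm\pi}^\tran-\bm\pi^\tran = \hat{\bm\pi}^\tran(\hat S-S)(I-S+\bm 1\bm\pi^\tran)^{-1}$, or use the perturbation bound $\|\hat{\bm\pi}-\bm\pi\|_{\bm\pi}\le \|\bm\pi^\tran(\hat S-S)\|_{\bm\pi}/(1-\max\{\lambda_2(S),-\lambda_n(S)\}-\|\hat S-S\|_{\bm\pi})$ as in \cite{Chenetal2019}. The numerator $\|\bm\pi^\tran(\hat S - S)\|_2$ is a sum of independent zero-mean terms. Conditioned on $\mathcal{G}$, the $i$th entry has variance of order $d_i/(d^2 k n^2)$, so by Hoeffding the norm is $O(\sqrt{\log n/(dk)})\|\bm\pi\|_2$. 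The denominator also needs $\|\hat S-S\|_2\lesssim\sqrt{\log n/(dk)}$, which follows from matrix Bernstein together with the row-sum bound from (ii). Requiring this to be at most $\gamma/2$ is where $c_0\ge 10240h^2/\gamma^2$ enters. Since $d\ge np/2$, this yields $\|\hat{\bm\pi}-\bm\pi\|_2/\|\bm\pi\|_2 \lesssim \gamma^{-1}\sqrt{\log n/(npk)}$.

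The $\ell^\infty$ step comes next. For each $m$, let $\hat S^{(m)}$ replace the comparisons involving item $m$ by their expectations, with stationary vector $\hat{\bm\pi}^{(m)}$. Using $\hat\pi_m = \sum_j \hat\pi_j\hat S_{jm}/(1-\hat S_{mm})$, we decompose $|\hat\pi_m-\pi_m|$ into three pieces. The first is a diagonal fluctuation term, bounded by concentration of $\sum_{j}(\hat p_{mj}-p_{mj})$ over the neighbours of $m$. The second is $\sum_j(\hat\pi_j-\pi_j)S_{jm}$, which is at most $\tfrac{h}{d}\sum_{j\sim m}|\hat\pi_j-\pi_j|\le \frac{h\sqrt{d_m}}{d}\|\hat{\bm\pi}-\bm\pi\|_2$. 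The third is a cross term $\sum_j \hat\pi^{(m)}_j(\hat S_{jm}-S_{jm})$, which is independent across its summands given $\hat{\bm\pi}^{(m)}$. A leave-one-out $\ell^2$ stability bound gives $\|\hat{\bm\pi}^{(m)}-\hat{\bm\pi}\|_2\lesssim\gamma^{-1}\sqrt{\log n/(npk)}\,\|\bm\pi\|_\infty$, again via the spectral gap applied to the leave-one-out chain, whose gap is within $\|\hat S^{(m)}-S\|$ of $S$'s. Since $1-\hat S_{mm}\ge d_m/((1+h)d)$ and $d_m/d$ is bounded below, the three pieces combine into $c_1/\gamma + c_2$ times $\sqrt{\log n/(npk)}\,\|\bm\pi\|_\infty$. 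We then take a union bound over $m$, using $k\ge5$ only to keep the Hoeffding tail at $n^{-5}$ scale.

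\textbf{Main obstacle.} The key difference from the Erd\H{o}s-R\'{e}nyi case is that $d=d_{\max}$ may be much larger than $d_m$, so ratios like $d_m/d$ and $\sqrt{d_m}/d$ must be controlled. This is where \cref{eq:norm-ratio-bound-assumption} enters. First, it gives $\sum_j q_{ij}\ge$ something comparable to $n\max$-type quantities. Second, by Cauchy--Schwarz and the Paley--Zygmund-type comparison $\mathbb{E}d_i\ge (n/s)\,\mathbb E[\text{row }\ell^2]$, it prevents a single row from concentrating mass. Third, it bounds the variance of the cross term via $\sum_j q_{mj}^2\le s(\sum_j q_{mj})^2/n$, which replaces the uniform $np^2$ estimate. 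I expect the delicate part to be showing $d_m/d$ is bounded below or, failing that, rewriting the bound so that only $\sqrt{\log n/(d_m k)}\le\sqrt{2\log n/(npk)}$ appears. I would first try the latter, normalizing each row's fluctuation by $d_m$ rather than $d$, since $1-\hat S_{mm}\propto d_m/d$ cancels the $1/d$ factors.
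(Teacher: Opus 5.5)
\textbf{Gap: the ``second piece'' of your $\ell^\infty$ step.} Your $\ell^2$ step is essentially the paper's \cref{lem:l2-error-bounds-for-unweighted-rank-centraliity}, with a harmless extra $\sqrt{\log n}$. The problem is the piece $\sum_j(\hat\pi_j-\pi_j)S_{jm}$.

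You bound it by $\frac{h\sqrt{d_m}}{d}\|\hat{\bm\pi}-\bm\pi\|_2$ and then divide by $1-\hat S_{mm}\gtrsim d_m/d$. This leaves order $\frac{1}{\sqrt{d_m}}\|\hat{\bm\pi}-\bm\pi\|_2$. Since $\|\bm\pi\|_2\ge\sqrt n\|\bm\pi\|_\infty/h$, your $\ell^2$ rate only gives order $\gamma^{-1}\sqrt{n/d_m}\,\sqrt{\log(n)/(npk)}\,\|\bm\pi\|_\infty$ here.

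That is too large by a factor $\sqrt{n/d_m}$, which can be as large as $\sqrt{n/(c_0\log n)}$, already for Erd\H{o}s--R\'enyi graphs near the threshold. The constants $c_1,c_2$ must be uniform over all $p$ with $np\ge c_0\log(n)$, so the pieces do not combine as you claim. Sharper constants will not fix this: Cauchy--Schwarz against the global $\ell^2$ error cannot rule out the error concentrating on $m$'s neighbourhood. Your leave-one-out also resamples only the comparisons, so $\hat{\bm\pi}^{(m)}$ still depends on $m$'s edges and gives no independence to exploit for this term.

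\textbf{What is missing.} The paper (following Chen et al.) uses the following ingredients.
\begin{itemize}
\item The leave-one-out matrix must also replace node $m$'s \emph{edges} by their means, with entries $p_{mj}q_{mj}/d$.
\item The error-weighted sum is then split as $\hat\pi_j-\pi_j=(\hat\pi_j-\hat\pi^{(-m)}_j)+(\hat\pi^{(-m)}_j-\bar\pi^{(-m)}_j)+(\bar\pi^{(-m)}_j-\pi_j)$.
\item For the middle term, condition on everything except $A_{mj}\sim\mathsf{Bernoulli}(q_{mj})$. The conditional mean is at most $\sqrt{\sum_j q_{mj}^2}\,\|\bm\delta^{(-m)}\|_2\le\sqrt{s/n}\,\bigl(\sum_j q_{mj}\bigr)\|\bm\delta^{(-m)}\|_2$.
\item This is exactly where \cref{eq:norm-ratio-bound-assumption} is needed. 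It yields $d_m/\sqrt n$ in place of your $\sqrt{d_m}$, which after dividing by $\sum_j\hat p_{mj}\asymp d_m$ gives the correct rate.
\item The Bernstein fluctuation of the same sum is $\|\bm\delta^{(-m)}\|_\infty(\sqrt{d_m\log n}+\log n)$. After normalization this is $O(c_0^{-1/2})\|\hat{\bm\pi}-\bm\pi\|_\infty$ plus lower-order terms.
\item Resampling edges also makes $\|\hat{\bm\pi}^{(-m)}-\hat{\bm\pi}\|_2$ pick up a $\|\hat{\bm\pi}-\bm\pi\|_\infty$ term, which comes from $\hat S^{(-m)}-\hat S^{(-m,\mathcal E)}$ in the paper.
\item Combining everything gives $\|\hat{\bm\pi}-\bm\pi\|_\infty\le A\sqrt{\log(n)/(npk)}\,\|\bm\pi\|_\infty+B\|\hat{\bm\pi}-\bm\pi\|_\infty$ with $B\le 9/10$, and the last term is absorbed into the left side.
\end{itemize}
Your proposal has neither the edge-level leave-one-out nor this self-bounding step.

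\textbf{Roles of the assumptions.} You also assign the variation condition the wrong roles.
\begin{itemize}
\item The variance of your comparison-only cross term involves just the realized $d_m$, so the condition is not needed there.
\item The condition does not bound $d_m/d$ from below. That bound instead follows from the spectral gap: testing the Dirichlet form with $\ind_{\{m\}}$ gives $1-|\lambda_2(S)|\le d_m/(d(1-\pi_m))$.
\item In any case, the paper normalizes by $\sum_j\hat p_{mj}$, so it never needs a lower bound on $d_m/d$.
\end{itemize}
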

\Cref{thm:entry-wise-error-bounds-for-unweighted-rank-centrality} is proved in \cref{app:unweighted-linf-proof}. A proof sketch is available in \cref{sec:proof-sketch-unweighted}. Some comments are due regarding each of the conditions that \cref{thm:entry-wise-error-bounds-for-unweighted-rank-centrality} requires, especially the variation condition \cref{eq:norm-ratio-bound-assumption} and spectral gap condition \cref{eq:spectral-gap-assumption}. First, \cref{eq:norm-ratio-bound-assumption} controls the asymptotic behavior of the $\ell^2$ norm of $\bm{q}_i=(q_{i1},\dots,q_{in})$ with respect to its $\ell^1$ norm, requiring that $\|\bm{q}_i\|_2/\|\bm{q}_i\|_1 = \Theta(1/\sqrt{n})$ for all $i\in[n]$. Alternatively, one may obtain a geometric interpretation by noting that the angle between $\bm{1}_n$ (the all-ones vector) and $\bm{q}_i$ is
\begin{equation}
\cos^{-1}\left(\frac{\bm{1}_n^\tran \bm{q}_i}{\|\bm{1}_n\|_2 \|\bm{q}_i\|_2}\right) = \cos^{-1}\left(\frac{\|\bm{q}_i\|_1}{\sqrt{n} \|\bm{q}_i\|_2}\right),
\end{equation}
thus \cref{eq:norm-ratio-bound-assumption} requires the angle between $\bm{1}_n$ and $\bm{q}_i$ to be upper bounded by $\cos^{-1}(1/\sqrt{s})$ for all $i\in[n]$.
We argue that this condition is not excessively restrictive, as it allows for several useful models such as stochastic block models (see proof of \cref{prop:error-bound-for-stochastic-block-model}). In fact, if $\bm{q}_i$ is uniformly sampled from $[0, 1]^n$, the quantity $\sqrt{n}\|\bm{q}_i\|_2/\|\bm{q}_i\|_1$ concentrates around $2/\sqrt{3}$ as $n\to\infty$. Thus, for sufficiently large $n$, most vectors $\bm{q}_i\in[0,1]^n$ satisfy \cref{eq:norm-ratio-bound-assumption} given $s > 4/3$.

The second condition, \cref{eq:spectral-gap-assumption}, requires the spectral gap of $S$, $1-|\lambda_2(S)|$ (recall that we order eigenvalues based on their absolute value in descending order), to be lower bounded by some constant with high probability. Such a requirement is expected because it is well established in the literature that BTL estimation critically depends on the spectral gap (or the effective resistance, a related quantity) of the graph (see, e.g., \cite{NegahbanOhShah2017,Shahetal2016,RankingGeneralGraphsLocality}). We emphasize, however, that the spectral gap is not the only piece in the puzzle; obtaining tight bounds as in \cref{thm:entry-wise-error-bounds-for-unweighted-rank-centrality} requires each term of the error to be carefully decomposed and controlled.

The key takeaway from this result is that when the spectral gap is lower bounded by a constant with high probability, the resulting error bounds are consistent with those for uniform sampling as derived in \cite{Chenetal2019}. Using the fact that Erd\H{o}s-R\'{e}nyi graphs preserve the spectral gap with high probability \cite{NegahbanOhShah2017}, we can recover the results of \cite{Chenetal2019} under uniform sampling in the following corollary.
\begin{corollary}[Error Bound for Erd\H{o}s-R\'{e}nyi Graphs]\label{cor:error-bound-under-erdos-renyi-graphs}
Suppose the observation graph $\mathcal{G}$ follows an Erd\H{o}s-R\'{e}nyi model with edge sampling probability $p$. Then, there exist constants $c_0>1$, $c_1>0$ such that for $np \geq c_0 \log (n)$, the approximate score vector $\hat{\bm{\pi}}$ estimated by the spectral method satisfies
\begin{equation}
    \frac{\|\hat{\bm{\pi}}-\bm{\pi}\|_\infty}{\|\bm{\pi}\|_\infty} \leq c_1 \sqrt{\frac{\log(n)}{npk}}
\end{equation}
with probability at least $1-O(n^{-5})$.
\end{corollary}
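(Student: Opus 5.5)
The plan is to obtain the corollary as a direct specialization of \cref{thm:entry-wise-error-bounds-for-unweighted-rank-centrality}: set $q_{ij}=p$ for all $i<j$ and check the theorem's two hypotheses. The variation condition \cref{eq:norm-ratio-bound-assumption} is immediate. For every $i$ we have
\begin{equation}
n\sum_{j\neq i} q_{ij}^2 = n(n-1)p^2,\qquad \Bigl(\sum_{j\neq i} q_{ij}\Bigr)^2=(n-1)^2p^2,
\end{equation}
so the ratio is $n/(n-1)\le 2$, and the condition holds with, say, $s=2$. The rest of the argument is the spectral gap condition \cref{eq:spectral-gap-assumption}. Once that is in hand, we fold $c_1/\gamma+c_2$ into a single constant and take $c_0$ to be the maximum of the thresholds coming from the theorem and from the spectral gap step. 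This gives the claimed bound with probability $1-O(n^{-5})$.

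For the spectral gap, I would follow \cite{NegahbanOhShah2017} and compare $S$ with the simple random walk. Since $\pi_i S_{ij}=\pi_j S_{ji}$, the chain $S$ is reversible with respect to $\bm{\pi}$. A comparison lemma on Dirichlet forms then gives
\begin{equation}
1-\lambda_2(S)\ \ge\ \frac{\min_{(i,j)\in\mathcal{E}}\pi_i S_{ij}}{\max_i \pi_i}\cdot\frac{1}{\max_i d_i}\bigl(1-\lambda_2(P)\bigr)\ \gtrsim\ \frac{d_{\min}}{h^{c}\,d_{\max}}\bigl(1-\lambda_2(P)\bigr),
\end{equation}
where $P=D^{-1}A$ is the simple random walk. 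For $G(n,p)$ with $np\ge c_0\log n$, a Chernoff bound plus a union bound shows that $d_{\min}\ge np/2$ and $d_{\max}\le 3np/2$ with probability $1-n^{-6}$, once $c_0$ is large enough. Standard concentration for the normalized adjacency matrix (matrix Bernstein, or Feige--Ofek type bounds) gives $\|D^{-1/2}AD^{-1/2}-\text{rank one}\|\le C/\sqrt{\log n}$ with probability $1-n^{-6}$. Together these give $1-\lambda_2(P)\ge 1/2$, and hence $1-\lambda_2(S)\ge \gamma_0$ for some constant $\gamma_0$ depending only on $h$.

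It remains to handle the negative end of the spectrum, because the theorem uses $|\lambda_2(S)|$. Each row of $S$ has off-diagonal mass at most $\frac1d\sum_l p_{il}\le \frac{h}{1+h}$, so $S_{ii}\ge 1/(1+h)$. The matrix $S-\frac{1}{1+h}I$ is nonnegative, reversible with respect to $\bm{\pi}$, and has row sums $h/(1+h)$, so its eigenvalues lie in $[-h/(1+h),h/(1+h)]$. This gives $\lambda_{\min}(S)\ge -1+2/(1+h)$, which bounds $|\lambda_n(S)|$ away from $1$. Taking $\gamma=\min\{\gamma_0,1/(1+h)\}/2$ then yields $\mathbb{P}(1-|\lambda_2(S)|\le\gamma)\le n^{-5}$.

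I expect the main obstacle to be the spectral gap step, specifically getting the failure probability as small as $n^{-5}$, which forces $c_0$ to be large. If it is acceptable to cite \cite{NegahbanOhShah2017} for the spectral gap of Erd\H{o}s--R\'{e}nyi graphs, as the paper suggests, this step reduces to a citation plus the lazy-chain argument above. The final choice of $c_0$ must also respect the theorem's requirement $c_0\ge 10240h^2/\gamma^2$. This is harmless because $\gamma$ depends only on $h$.
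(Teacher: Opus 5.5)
Your proposal is correct and follows the route the paper indicates. You specialize \cref{thm:entry-wise-error-bounds-for-unweighted-rank-centrality} to $q_{ij}=p$, check the variation condition with a constant $s$, and verify the spectral gap via the comparison-with-simple-random-walk argument of \cite{NegahbanOhShah2017}; your laziness bound $S_{ii}\ge 1/(1+h)$ also handles the negative end of the spectrum, which the comparison lemma alone does not control.

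Two small repairs are needed. First, the comparison factor should read $\min_{(i,j)\in\mathcal{E}}\pi_iS_{ij}/\max_i(\pi_i/d_i)\ge d_{\min}/(2h\,d_{\max})$. Your display has $1/\max_i d_i$ where a factor $\min_i d_i$ belongs, although your final bound is right. Second, the theorem assumes $k\ge 5$ while the corollary does not. Either add that hypothesis, or note that the theorem's proof uses $k$ only through the product $c_0k$, so enlarging $c_0$ covers every $k\ge 1$.
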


\subsection{Bounds for Stochastic Block Models}\label{subsec:sbm}
Next, we present a nontrivial class of stochastic block models (SBMs) that satisfy the spectral gap condition required by \cref{thm:entry-wise-error-bounds-for-unweighted-rank-centrality}. We begin with a formal definition of the SBM.
\begin{definition}[Stochastic Block Model {{\cite{HollandBlackmondLeinhardt1983,LoweTerveer2025}}}]
Let $m>1$ be the number of blocks, and suppose that the number of items $n$ is some integer multiple of $m$ such that $n/m\geq 2$. Without loss of generality, let $[n]$ be partitioned into $m$ blocks, $B_1=\{1,2,\dots,n/m\},B_2=\{n/m+1,\dots,2n/m\},\dots,\allowbreak B_m=\{n(m-1)/m+1,\dots,n\}$. Let $\bm{q}=(q_1,\dots,q_m)$ be the sampling probabilities for edges within each block, and let $p$ be the sampling probabilities for edges across blocks. Without loss of generality, assume that $q_1 \leq q_2 \leq \dots \leq q_m$, and suppose that $q_1>p$ (i.e., the SBM is assortative). Then, a graph $\mathcal{G}=([n],\mathcal{E})$ under the $\mathsf{SBM}(n,m,p,\bm{q})$ model is generated by independently picking each edge $(i, j)$ with probability $q_l$ if $i, j \in B_l$ for some $l \in [m]$ and $p$ otherwise.
\end{definition}

It is clear from the definition that the stochastic block model is a valid semi-random graph model with base probability $p$. Note that when studying the asymptotics of the SBM, we will assume that $m$ (the number of blocks) is constant with respect to $n$, but $p$ and $\bm{q}$ may depend on $n$. Using recent results by \cite{LoweTerveer2025} on the spectra of SBM graphs, we can derive a counterpart of \cref{cor:error-bound-under-erdos-renyi-graphs} for SBMs that satisfy a certain dynamic range condition.

\begin{proposition}[Error Bound for Stochastic Block Models]\label{prop:error-bound-for-stochastic-block-model}
Suppose the observation graph $\mathcal{G}\sim \mathsf{SBM}(n, m, p, \bm{q})$ with $np\geq c_0 \log^5(n)$ and $q_m \leq rp$ for some constants $c_0, r > 1$. Then, there exists some constant $c_1 > 0$ such that the approximate score vector $\hat{\bm{\pi}}$ estimated by the spectral method satisfies
\begin{equation}
    \frac{\|\hat{\bm{\pi}}-\bm{\pi}\|_\infty}{\|\bm{\pi}\|_\infty} \leq c_1 \sqrt{\frac{\log(n)}{npk}}
\end{equation}
with high probability.
\end{proposition}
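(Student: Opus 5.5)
The plan is to derive the proposition directly from \cref{thm:entry-wise-error-bounds-for-unweighted-rank-centrality}. That requires three things: the SBM fits the semi-random model, the variation condition \cref{eq:norm-ratio-bound-assumption} holds with a constant $s$, and the spectral gap condition \cref{eq:spectral-gap-assumption} holds with a constant $\gamma$. The first is immediate, since every edge probability lies in $\{p,q_1,\dots,q_m\}\subseteq[p,1]$. The density requirement $np\ge c_0'\log n$ with $c_0'\ge 10240h^2/\gamma^2$ is also automatic for large $n$, because $np\ge c_0\log^5 n$.

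For \cref{eq:norm-ratio-bound-assumption}, fix $i\in B_l$. The row vector $\bm{q}_i$ has $n/m-1$ entries equal to $q_l$ and $n-n/m$ entries equal to $p$. Since $p\le q_j\le rp$ for all $j$, we have $n\sum_j q_{ij}^2\le n\cdot n(rp)^2$ and $(\sum_j q_{ij})^2\ge (n-1)^2p^2$. Hence $s=2r^2$ works for all $n\ge 4$, and more generally any $s>r^2$ works for large $n$. This bounded-ratio check is the only place the hypothesis $q_m\le rp$ enters the variation condition.

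The main step is the spectral gap of $S$. I would compare $S$ with the simple random walk through a symmetrization. Set $D_\pi=\mathrm{diag}(\bm\pi)$. The matrix $D_\pi^{1/2}SD_\pi^{-1/2}$ is symmetric because $S$ is reversible: $\pi_iS_{ij}=\alpha_i\alpha_j/((\alpha_i+\alpha_j)d\sum\alpha)$. Its spectral gap is controlled through Dirichlet forms. On each edge, $\pi_iS_{ij}\ge \frac{1}{h(1+h)}\cdot\frac{1}{nd}$, and $\pi_i\le h/n$. The comparison theorem for reversible chains (as in \cite{NegahbanOhShah2017}) then gives
\begin{equation}
1-\lambda_2(S)\ \ge\ \frac{c(h)\,d_{\min}}{d_{\max}}\,\bigl(1-\lambda_2(D^{-1}A)\bigr),
\end{equation}
where $A$ is the adjacency matrix and $D$ the degree matrix. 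Laziness handles the negative end of the spectrum. By Chernoff bounds and a union bound, every degree lies in $[(1\pm o(1))\mathbb{E}d_i]$ with probability $1-n^{-5}$, since the expected degrees are $\ge np\gg\log n$. The expected degrees lie between $np$ and $rnp$, so $d_{\min}/d_{\max}\ge 1/(2r)$.

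It remains to show that $1-\lambda_2(D^{-1}A)$ is bounded below by a constant with probability $1-n^{-5}$. This is where I would invoke \cite{LoweTerveer2025}. The normalized adjacency $D^{-1/2}AD^{-1/2}$ concentrates, with $\|D^{-1/2}AD^{-1/2}-\bar D^{-1/2}\bar A\bar D^{-1/2}\|=o(1)$ when $np\gtrsim\log^5 n$ (the source of the $\log^5$ hypothesis). Here $\bar A$ is the block-constant expectation matrix. The $m$ nontrivial eigenvalues of $\bar D^{-1}\bar A$ are those of an $m\times m$ quotient matrix $M$ with entries $M_{ll'}\propto (n/m)\,p$ off the diagonal and $(n/m)q_l$ on it, normalized row-wise. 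Because $p\le q_l\le rp$, every entry of this stochastic matrix is at least $1/(rm)$. Doeblin's bound then gives $|\lambda_2(M)|\le 1-m\cdot\frac{1}{rm}\cdot$const, so a constant gap depending only on $r$ and $m$. The remaining $n-m$ eigenvalues of the expectation are $O(1/n)$.

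Weyl's inequality combines the concentration with this constant gap to give \cref{eq:spectral-gap-assumption} with $\gamma=\gamma(h,r,m)>0$. I expect the main obstacle to be matching the precise form and failure probability of the \cite{LoweTerveer2025} result. If that result only gives probability $1-o(1)$, I would fall back on a matrix Bernstein bound for $A-\bar A$, which gives $\|A-\bar A\|=O(\sqrt{rnp\log n})$ with probability $1-n^{-5}$ and suffices since $np\gg\log n$. With $s$ and $\gamma$ constant, \cref{thm:entry-wise-error-bounds-for-unweighted-rank-centrality} yields the stated bound with $c_1$ replaced by $c_1/\gamma+c_2$.
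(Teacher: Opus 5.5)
Your proposal is correct and follows the same route as the paper. It reduces to \cref{thm:entry-wise-error-bounds-for-unweighted-rank-centrality}, checks the variation condition with a constant $s$, and bounds $1-|\lambda_2(S)|$ below by a constant times $\frac{d_{\min}}{d_{\max}}\bigl(1-|\lambda_2(D^{-1}A)|\bigr)$ via the comparison theorem of \cite{NegahbanOhShah2017}, with the random-walk gap supplied by \cite{LoweTerveer2025}.

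The differences are minor:
\begin{itemize}
\item The paper obtains $s=2m$ from the exact block computation, without using $q_m\le rp$; your cruder bound gives $s=2r^2$.
\item You make explicit points the paper leaves implicit: $d_{\min}/d_{\max}\ge 1/(2r)$ via Chernoff, and laziness of $S$ to control the negative end of the spectrum.
\item Your matrix Bernstein fallback secures the $1-n^{-5}$ probability required by \cref{eq:spectral-gap-assumption}, which the paper does not check. That fallback would also only need $np\ge C\log n$ for a constant $C$ depending on $r$, rather than $\log^5 n$.
\end{itemize}
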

\begin{proof}
Let $A$ be the adjacency matrix of $\mathcal{G}$ (i.e., $A_{ij} = \ind\{(i, j) \in \mathcal{E}\}$) and $D$ the degree matrix (i.e., $D=\text{diag}(\bm{d}=(d_1,\dots,d_n))$, where $d_i$ represents the degree of node $i$).

First, we must show that the variation bound \cref{eq:norm-ratio-bound-assumption} is satisfied.
For all $l \in [m]$ and $i \in B_l$,
\begin{align}
n\sum_{j:j\neq i} q_{ij}^2 &= \left(\frac{n^2}{m} - n\right) q_l^2 + \frac{n^2(m-1)}{m} p^2 \\
&\leq 2m \left(\left(\frac{n}{m}-1\right)^2 q_l^2 + \frac{n^2(m-1)^2}{m^2} p^2\right) \\
&\leq 2m \left(\sum_{j:j\neq i} q_{ij}\right)^2,
\end{align}
thus \cref{eq:norm-ratio-bound-assumption} holds with $s=2m$ (recall that we assume $m$ to be a constant with respect to $n$).

Next, we must control the spectral gap of $S$. From \cite[Corollary 4.3]{LoweTerveer2025}, we have that
\begin{equation}
1-|\lambda_2(A^{\mathsf{rw}})| \geq \gamma
\end{equation}
with high probability for some constant $\gamma>0$, where $A^{\mathsf{rw}} \triangleq D^{-1} A$ refers to the simple random walk matrix of $\mathcal{G}$, which is similar to the symmetric normalized adjacency matrix $D^{-1/2}AD^{-1/2}$. Note that conditions (1.1) and (1.2) of \cite{LoweTerveer2025} are always satisfied and the assortativity constant $\kappa$ defined in \cite[Equation (1.9)]{LoweTerveer2025} is always positive due to our assumptions on $p$ and $\bm{q}$.

Since the stationary distribution of $A^{\mathsf{rw}}$ is known to be $\bm{d}/\sum_i d_i$, we can invoke a comparison theorem \cite[Lemma 6]{NegahbanOhShah2017} to relate the spectral gaps of $S$ and $A^{\mathsf{rw}}$. With high probability,
\begin{align}
1-|\lambda_2(S)| &\geq  \frac{\displaystyle \min_{(i,j)\in\mathcal{E}} \frac{\pi_i S_{ij}}{d_i A^{\mathsf{rw}}_{ij}}}{\displaystyle \max_i \frac{\pi_i}{d_i}} (1-|\lambda_2(A^{\mathsf{rw}})|)\\
&\geq \frac{d_{\text{min}}}{d_{\text{max}}}\frac{\displaystyle\min_{(i,j)\in\mathcal{E}} \frac{\pi_i \pi_j}{\pi_i+\pi_j}}{\|\bm{\pi}\|_\infty} \gamma \\
&\geq \frac{d_{\text{min}}}{d_{\text{max}}} \frac{1}{2h^2} \gamma,
\end{align}
which is a positive constant. Thus, all conditions of \cref{thm:entry-wise-error-bounds-for-unweighted-rank-centrality} are met, completing the proof.
\end{proof}
Note that the density requirement $np \geq c_0 \log^5(n)$ is only needed so that the results of \cite{LoweTerveer2025} can be applied to our regime, and is not necessarily a limitation of the algorithm. It is possible that \cref{prop:error-bound-for-stochastic-block-model} can be generalized to a broader class of SBMs, e.g., a regime where $m$ increases as a function of $n$, but we do not consider such cases for the sake of simplicity.

\section{Main Results on Weighted Spectral Method}
Although our results in the previous section demonstrate that the standard spectral method remains effective in a broader range of settings beyond the Erd\H{o}s-R\'{e}nyi model, such as the important stochastic block model, no guarantees on performance can be made on graphs with vanishing spectral gaps. For example, an SBM that violates the dynamic range condition of \cref{prop:error-bound-for-stochastic-block-model} may have an asymptotically vanishing spectral gap (see, e.g., \cref{sec:experiments}). In this section, we address this limitation and provide theoretical guarantees that are robust against graphs with undesirable spectral properties. To this end, we examine a version of rank centrality with \emph{edge reweighting}, taking inspiration from a line of work including \cite{ChengGe2018,JambulapatiLiMusco2021,MonotoneAdversary} that studies how semi-random observations can be reweighted to more closely resemble a uniform sample which enjoys better guarantees.
\subsection{Weighted Algorithm Overview}\label{subsec:weighted-algorithm}
The goal of the weighted rank centrality algorithm is to assign weights $w_{ij} \in [0, 1]$ to each edge of the observation graph such that the weighted graph $\mathcal{G}=([n], \mathcal{E}, \{w_{ij}\})$ has better spectral properties. The weights are symmetric (i.e., $w_{ji}=w_{ij}$), and $w_{ij}=0$ for $(i, j) \notin \mathcal{E}$. Let
\begin{align}
    d_\text{max} &\triangleq \max_{i \in [n]} \sum_{j: (i, j) \in \mathcal{E}} w_{ij}, \\
    d_\text{min} &\triangleq \min_{i \in [n]} \sum_{j: (i, j) \in \mathcal{E}} w_{ij},
\end{align}
denote the maximum and minimum weighted degree of $\mathcal{G}$ respectively, and let $w_\text{max}\leq 1$ be the maximum edge weight.

Next, we redefine the Markov matrix $S$ and its empirical counterpart $\hat{S}$ to incorporate these weights.
\begin{align}
    S_{ij} &\triangleq \begin{cases}
        \frac{p_{ij} w_{ij}}{d}, & \text{if } (i, j) \in \mathcal{E}, \\
        1 - \frac{1}{d} \sum_{l: (i, l) \in \mathcal{E}} p_{il} w_{il}, & \text{if } i = j, \\
        0, & \text{otherwise,}
    \end{cases}\label{eq:weighted-S-definition} \\
    \hat{S}_{ij} &\triangleq \begin{cases}
        \frac{\hat{p}_{ij} w_{ij}}{d}, & \text{if } (i, j) \in \mathcal{E}, \\
        1 - \frac{1}{d} \sum_{l: (i, l) \in \mathcal{E}} \hat{p}_{il} w_{il}, & \text{if } i = j, \\
        0, & \text{otherwise,}
    \end{cases}\label{eq:weighted-S-hat-definition}
\end{align}
where $d=d_\text{max}$.
Observe that the stationary distribution $\pi$ of $S$ remains unchanged and is given by \cref{eq:pi}. Similarly to the original rank centrality algorithm, the weighted rank centrality algorithm estimates the normalized score vector by computing the stationary distribution $\hat{\pi}$ of the weighted empirical Markov matrix $\hat{S}$.

Critical to the performance of this algorithm is the procedure in which the weights $w_{ij}$ are chosen. As a tool to analyze the effectiveness of the chosen weights, we define the weighted Laplacian matrix $L^W$ of $\mathcal{G}$,
\begin{equation}
    L_w \triangleq \sum_{(i, j) \in \mathcal{G}, i<j} w_{ij} (\bm{e}_i-\bm{e}_j)(\bm{e}_i-\bm{e}_j)^\tran,
\end{equation}
where $\bm{e}_i$ is the standard $i$th basis vector. The spectral gap of $L^W$,
\begin{equation}
    \lambda_{n-1}(L^W) = \min_{\bm{v} \in \mathbb{R}^n: \bm{v}^\tran \bm{1}_n=0} \frac{\bm{v}^\tran L^W \bm{v}}{\|\bm{v}\|_2^2}, 
\end{equation}
the second smallest eigenvalue of $L^W$ (also called the Fiedler value or algebraic connectivity), characterizes the spectral properties of the weighted graph. Note that $\lambda_{n-1}(L^W)$ only depends on the graph $\mathcal{G}$ and the set of weights $w_{ij}$, not the underlying score vector $\bm{\pi}$ or probabilities $p_{ij}$.

\subsection{Entry-Wise Error Bounds}
In this section, we present the $\ell^\infty$-error bounds for the weighted rank centrality algorithm. We begin by analyzing the general case for any arbitrary set of weights, then discuss a method to effectively choose these weights in order to guarantee good performance against a semi-random adversary. 

First, we derive the following error bound for a deterministic observation graph that holds for any given set of weights.
\begin{theorem}[Error Bounds for Generic Weighted Rank Centrality]\label{thm:error-bounds-for-weighted-rank-centrality}
Suppose that the weighted observation graph $\mathcal{G}=([n], \mathcal{E}, \{w_{ij}\})$ is connected. Assume that $d_\text{min} \geq 1$, $d_\text{max} \leq 2np$ for some $p$, and $k \geq 640 h^2 n \log(n) p / \lambda_{n-1}(L^W)^2$.

Then, there exists constants $c_1,c_2,c_3>0$ such that the approximate score vector $\hat{\bm{\pi}}$ generated by the weighted spectral method satisfies
\begin{align}
    \frac{\|\hat{\bm{\pi}}-\bm{\pi}\|_\infty}{\|\bm{\pi}\|_\infty} \leq \Biggl(&\frac{c_1}{\lambda_{n-1}(L^W)} \sqrt{\frac{n \log(n) p}{k}} \\
    &+\frac{c_2}{\lambda_{n-1}(L^W)^2}\sqrt{\frac{n^2\log(n) p^2}{k}} \\
    &+\frac{c_3}{\lambda_{n-1}(L^W)^3}\sqrt{\frac{n^4\log(n) p^3}{k}} \Biggr)
\end{align}
with probability at least $1-O(n^{-5})$.
\end{theorem}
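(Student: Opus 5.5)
The plan is to follow the leave-one-out perturbation framework of \cite{Chenetal2019}, specialized to a deterministic weighted graph, and to keep every spectral quantity explicit in terms of $\lambda_{n-1}(L^W)$. Write $\Delta \triangleq \hat{S}-S$. The stationarity relations $\hat{\bm{\pi}}^\tran \hat{S}=\hat{\bm{\pi}}^\tran$ and $\bm{\pi}^\tran S=\bm{\pi}^\tran$ give the entrywise identity
\begin{equation}
\hat{\pi}_m-\pi_m = \frac{\sum_{j}\bigl(\hat{\pi}_j \hat{S}_{jm} - \pi_m \hat{S}_{mj}\bigr) + \pi_m\sum_j \hat{S}_{mj}-\pi_m}{1} ,
\end{equation}
or, more usefully,
\begin{equation}
\hat{\pi}_m-\pi_m = \frac{1}{\sum_{j}\hat{S}_{mj}^{\mathrm{off}}}\Bigl(\sum_{j\neq m}\pi_j(\hat{S}_{jm}-S_{jm}) - \pi_m\sum_{j\neq m}(\hat{S}_{mj}-S_{mj}) + \sum_{j\neq m}(\hat{\pi}_j-\pi_j)\hat{S}_{jm}\Bigr).
\end{equation}
This splits the error at coordinate $m$ into a pure noise term, which is a sum of $k\cdot\deg$ independent bounded variables, and a coupling term involving the rest of the error vector. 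The denominator is at least of order $d_{\min}/(h d)\gtrsim 1/(hnp)$, since $d_{\min}\geq 1$ and $d\le 2np$. This is exactly where factors of $np$ enter, and those factors produce the increasing powers of $p$ and $n$ in the three terms.

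\textbf{Step 1 ($\ell^2$ bound).} First I will bound $\|\hat{\bm{\pi}}-\bm{\pi}\|_{\bm{\pi}}$ via the standard perturbation bound for Markov chains (e.g.\ \cite[Theorem 8]{Chenetal2019}):
\begin{equation}
\|\hat{\bm{\pi}}-\bm{\pi}\|_{\bm{\pi}}\le \frac{\|\bm{\pi}^\tran\Delta\|_{\bm{\pi}}}{1-\max\{\lambda_2(S),-\lambda_n(S)\}-\|\Delta\|_{\bm{\pi}}}.
\end{equation}
The spectral gap of $S$ is compared to $\lambda_{n-1}(L^W)$ through reversibility: the matrix $\Pi^{1/2}(I-S)\Pi^{-1/2}$ has quadratic form $\frac1d\sum w_{ij}\frac{\pi_i\pi_j}{\pi_i+\pi_j}\cdot(\text{scaled differences})$. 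This yields $1-\lambda_2(S)\gtrsim \lambda_{n-1}(L^W)/(h^{c}\,np)$, with the negative eigenvalue controlled by the laziness $S_{ii}\ge 1/2$ (we may assume a factor $2d$ if needed). Next, $\|\Delta\|_2$ is bounded by matrix Bernstein applied to the independent $k$-sample averages on edges, giving $\|\Delta\|_2\lesssim \frac1d\sqrt{d\log n/k}$ with probability $1-O(n^{-5})$. The assumption $k\ge 640h^2 n\log(n)p/\lambda_{n-1}(L^W)^2$ is precisely what makes $\|\Delta\|_{\bm{\pi}}$ at most half the gap. Similarly, $\|\bm{\pi}^\tran\Delta\|_2\lesssim \|\bm{\pi}\|_\infty\sqrt{n d\log n/k}/d$ by Hoeffding per coordinate. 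Together these give an $\ell^2$ error of order $\|\bm{\pi}\|_\infty\frac{np}{\lambda_{n-1}}\sqrt{\frac{\log n}{pk}}$ up to normalization.

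\textbf{Step 2 (entrywise bound).} Next I plug into the coordinate identity. The noise term is bounded by Hoeffding: $|\sum_j \pi_j(\hat{S}_{jm}-S_{jm})|\lesssim \|\bm{\pi}\|_\infty\sqrt{d\log n/k}/d$. Multiplying by the inverse denominator $\lesssim np/d_{\min}$ gives the first term $\frac{c_1}{\lambda_{n-1}}\sqrt{np\log n/k}$ once the gap normalization is tracked. The coupling term $\sum_j(\hat{\pi}_j-\pi_j)\hat{S}_{jm}$ is the obstacle, because $\hat{\bm{\pi}}$ depends on the row-$m$ noise. I will handle it with a leave-one-out chain $\hat{S}^{(m)}$ that replaces the row/column-$m$ empirical probabilities by their means. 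Then I write
\begin{equation}
\sum_j(\hat{\pi}_j-\pi_j)\hat{S}_{jm}=\sum_j(\hat{\pi}^{(m)}_j-\pi_j)\hat{S}_{jm}+\sum_j(\hat{\pi}_j-\hat{\pi}^{(m)}_j)\hat{S}_{jm}.
\end{equation}
The first piece has $\hat{\bm{\pi}}^{(m)}$ independent of the row-$m$ noise and is controlled by Cauchy--Schwarz with $\|\hat S_{\cdot m}\|_2\le w_{\max}\sqrt{d}/d$ plus Hoeffding. The second piece needs $\|\hat{\bm{\pi}}-\hat{\bm{\pi}}^{(m)}\|_{\bm{\pi}}$, which I bound by applying the same perturbation inequality to $\hat S$ versus $\hat S^{(m)}$. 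The difference $\bm{\pi}^{(m)\tran}(\hat S-\hat S^{(m)})$ is supported on one row and column, and a second gap factor $1/\lambda_{n-1}$ enters here.

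\textbf{Step 3 (collecting terms).} Each time the $\ell^2$ bound is converted to an entrywise bound through the degree denominator and the gap, an extra factor of order $np/\lambda_{n-1}(L^W)$ (or $\sqrt{n}\cdot np/\lambda_{n-1}$ from $\ell^2\to\ell^\infty$ via $\|\cdot\|_2\le\sqrt n\|\cdot\|_\infty$-type steps) appears. This yields the second and third terms with $\lambda_{n-1}^{-2}$ and $\lambda_{n-1}^{-3}$. I will not attempt a self-consistent (contraction) argument for the $\ell^\infty$ norm, since without degree concentration the crude conversions are what produce the $n^4p^3$ term. Finally, a union bound over $m\in[n]$ and the Bernstein/Hoeffding events, each at level $n^{-6}$, gives overall probability $1-O(n^{-5})$.

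The main difficulty is Step 2: the leave-one-out decoupling on a deterministic, possibly irregular weighted graph. The degree denominators and the gap comparison must be tracked carefully without any degree-concentration assumption, since only $d_{\min}\ge1$ and $d_{\max}\le 2np$ are given.
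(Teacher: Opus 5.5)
\textbf{There is a genuine gap in Steps~2--3.} It lies in how you bound the leave-one-out mean part of the coupling term. Write $\lambda=\lambda_{n-1}(L^W)$.

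\emph{Why the Cauchy--Schwarz step loses.} After dividing by $D_m=\frac1d\sum_j w_{mj}\hat p_{mj}\gtrsim d_m/(hd)$, your Cauchy--Schwarz step bounds $\frac{1}{dD_m}\sum_j w_{jm}p_{jm}(\hat\pi^{(m)}_j-\pi_j)$ by $\frac{h}{\sqrt{d_m}}\|\hat{\bm{\pi}}^{(m)}-\bm{\pi}\|_2$. Combined with your Step~1, this is of order $\|\bm{\pi}\|_\infty\frac{n}{\lambda\sqrt{d_m}}\sqrt{p\log n/k}$, and that is not dominated by the three stated terms.

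Take $w\equiv 1$ on an Erd\H{o}s--R\'enyi graph with $p=n^{-1/2}$. Then $\lambda\asymp d_m\asymp np$ and all hypotheses hold. Your term is $\asymp\|\bm{\pi}\|_\infty\sqrt{\log n/k}$, while the theorem's right-hand side is $\asymp n^{-1/4}\|\bm{\pi}\|_\infty\sqrt{\log n/k}$.

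Step~1 is tight up to $\sqrt{\log n}$ in this example, since the true $\ell^2$ error is $\asymp\|\bm{\pi}\|_\infty/\sqrt{pk}$. So the loss is in Cauchy--Schwarz itself. Most of $\hat{\bm{\pi}}-\bm{\pi}$ in $\ell^2$ is local noise with no reason to align with column $m$. On a fixed graph there is also no random column to average over; the gain in \cref{thm:entry-wise-error-bounds-for-unweighted-rank-centrality} came precisely from \cref{eq:norm-ratio-bound-assumption} applied to a random column.

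\emph{Why a self-consistent argument does not help.} By detailed balance, $\pi_jp_{jm}=\pi_mp_{mj}$. So the coupling term divided by $\pi_m$ is, up to noise, an exact $w_{mj}p_{mj}$-weighted average of the neighbours' relative errors. The contraction factor is therefore $1$, and an $\ell^\infty$ fixed-point argument does not close.

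\emph{The bookkeeping does not produce the stated terms.} Multiplying the first term by $np/\lambda$ gives $\lambda^{-2}\sqrt{n^3p^3\log n/k}$, and multiplying by $\sqrt n\,np/\lambda$ gives $\lambda^{-2}\sqrt{n^4p^3\log n/k}$. Neither is one of the stated terms, so the exponents in the statement are asserted rather than derived.

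\textbf{The missing idea} is the paper's split $\hat{\bm{\pi}}-\bm{\pi}=(\bar{\bm{\pi}}-\bm{\pi})+(\hat{\bm{\pi}}-\bar{\bm{\pi}})$, where $\bar\pi_i=\sum_j w_{ji}\hat p_{ji}\pi_j/\sum_j w_{ij}\hat p_{ij}$.

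The first piece is an explicit function of the data. It is bounded entrywise by Hoeffding over the denominator $\pi_i\sum_j w_{ij}\hat p_{ij}\ge\lambda_{n-1}(L^W_{\bm{\pi}})/4$, where $L^W_{\bm{\pi}}$ is the $\bm{\pi}$-weighted Laplacian. This lower bound uses $\lambda_{n-1}(L^W_{\bm{\pi}})\le 2d^{\bm{\pi}}_{\min}$, not $d_{\min}\ge1$, and that is what produces the $1/\lambda$ in the first term.

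The second piece, $\delta_i=(\hat\pi_i-\bar\pi_i)/\pi_i$, is a local average of errors. It solves the Laplacian system $\hat L^W_{\bm{\pi}}\bm{\delta}=\bm{r}$ with $r_i=\sum_j w_{ji}\hat p_{ji}(\bar\pi_j-\pi_j)$.
\begin{itemize}
\item Matrix Bernstein plus the $k$-hypothesis give $\|\hat L^W_{\bm{\pi}}-L^W_{\bm{\pi}}\|_2\le\lambda_{n-1}(L^W_{\bm{\pi}})/2$. Hence $\|\bm{\delta}\|_2\lesssim\|\bm{r}\|_2/\lambda_{n-1}(L^W_{\bm{\pi}})+|\bm{\delta}^\tran\bm{1}_n|/\sqrt n$.
\item $\bm{r}$ is handled by a leave-one-out of the explicit $\bar{\bm{\pi}}$, so no eigenvector perturbation is needed.
\item $\bm{\delta}^\tran\bm{1}_n$ is handled through $\sum_i\hat\pi_i=\sum_i\pi_i$.
\end{itemize}

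Only then is Cauchy--Schwarz applied, inside $\delta_i=\bigl(\sum_j w_{ji}\hat p_{ji}\pi_j\delta_j+r_i\bigr)/\bigl(\pi_i\sum_j w_{ij}\hat p_{ij}\bigr)$.
\begin{itemize}
\item The $r_i$ part gives the $\lambda^{-2}$ term.
\item The paper's bound on $\|\bm{\delta}\|_2$ is of order $n^{3/2}p\lambda^{-2}\sqrt{\log n/k}$. This is smaller than your $\|\hat{\bm{\pi}}-\bm{\pi}\|_2/\|\bm{\pi}\|_\infty\sim n\sqrt p\,\lambda^{-1}\sqrt{\log n/k}$ by a factor $\lambda/\sqrt{np}$. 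The Cauchy--Schwarz part therefore yields exactly $\lambda^{-3}\sqrt{n^4p^3\log n/k}$.
\end{itemize}

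Your Step~1 (Markov-chain perturbation with gap $\gtrsim\lambda/(hd)$) is a valid route to an $\ell^2$ bound. It matches the $k$-hypothesis only up to the constant and the power of $h$, not ``precisely''. The real problem is that $\|\hat{\bm{\pi}}-\bm{\pi}\|_2$ is the wrong quantity to feed into the entrywise step.
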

\Cref{thm:error-bounds-for-weighted-rank-centrality} is proved in \cref{app:proof-error-bounds-for-weighted-rank-centrality}. A proof sketch is provided in \cref{sec:proof-sketch-weighted}. Note that the error bounds depend almost entirely on the spectral properties of $\mathcal{G}$, namely $\lambda_{n-1}(L^W)$. In particular, one can recover a result similar to \cref{cor:error-bound-under-erdos-renyi-graphs} from this theorem by observing that unweighted Erd\H{o}s-R\'{e}nyi graphs satisfy $\lambda_{n-1}(L^W) \geq np/2$ (and the $d_\text{min}$ and $d_\text{max}$ constraints) with high probability (see \cite[Lemma 10]{MonotoneAdversary}). Thus, as long as we can find an appropriate set of weights such that $\lambda_{n-1}(L^W)$ grows like $np$ (i.e., so that $\mathcal{G}$ ``behaves like'' an Erd\H{o}s-R\'{e}nyi graph spectrally), we can achieve ``Erd\H{o}s-R\'{e}nyi-like'' performance for a larger class of graphs, such as semi-randomly generated graphs.

To show that such a set of weights must exist, we can utilize a monotone coupling argument (see \cref{lem:recovering-properties-of-erdos-renyi-graphs}) to show that any property that holds with high probability for Erd\H{o}s-R\'{e}nyi graphs can also be recovered with high probability from a well-chosen subgraph of a semi-randomly generated graph. Furthermore, given any subgraph $\mathcal{G}'=([n], \mathcal{E}')$ of $\mathcal{G}$, we can choose weights
\begin{equation}
    w_{ij} = \begin{cases}
        1, & (i, j) \in \mathcal{E'}, \\
        0, & \text{otherwise}
    \end{cases}
\end{equation}
so that the weighted graph is equivalent to $\mathcal{G}'$. This implies that there must exist a set of weights such that the weighted graph satisfies $\lambda_{n-1}(L^W)\geq np/2$ with high probability. Thus, the problem reduces to finding the weights (under the $d_\text{min}$ and $d_\text{max}$ constraints) that maximize $\lambda_{n-1}(L^W)$.

As \cite{MonotoneAdversary} showed, this optimization problem can be formulated as a semi-definite program (SDP) approximately solvable in near-linear time with the Matrix Multiplicative Weight Update (MMWU) algorithm \cite[Algorithm 2]{MonotoneAdversary}. Since the MMWU method can $1/2$-approximate the optimization problem in near-linear time, we can obtain the following guarantee for the MMWU-weighted spectral method.
\begin{theorem}[Error Bounds for MMWU-Weighted Spectral Method]\label{thm:error-bounds-for-weighted-rank-centrality-with-mmwu}
Assume that the observation graph is generated by a semi-random adversary with base probability $p$ satisfying $np \geq c_0 \log(n)$ for some constant $c_0>1$. Then, there exists some constants $c_1,c_2,c_3>0$ such that for all
$k \geq 10240h^2$, the approximate score vector $\hat{\bm{\pi}}$ estimated by the weighted spectral method with MMWU \cite[Algorithm 2]{MonotoneAdversary} satisfies
\begin{equation}
\frac{\|\hat{\bm{\pi}}-\bm{\pi}\|_\infty}{\|\bm{\pi}\|_\infty} \leq c_1 \sqrt{\frac{\log(n)}{npk}} + c_2 \sqrt{\frac{n\log(n)}{(np)^3k}} 
\end{equation}
with probability at least $1-O(n^{-5})$.
\end{theorem}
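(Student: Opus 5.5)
The plan is to derive \cref{thm:error-bounds-for-weighted-rank-centrality-with-mmwu} as a corollary of \cref{thm:error-bounds-for-weighted-rank-centrality}. The work consists of checking that the MMWU weights satisfy its hypotheses with high probability and that $\lambda_{n-1}(L^W) = \Omega(np)$. Once that holds, we substitute this lower bound into the three error terms.

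\textbf{Step 1: existence of a good weighting.} By the monotone coupling argument (\cref{lem:recovering-properties-of-erdos-renyi-graphs}), the semi-random graph $\mathcal{G}$ contains, with probability $1-O(n^{-5})$, a subgraph $\mathcal{G}'$ distributed as an Erd\H{o}s-R\'{e}nyi graph with parameter $p$. Taking $c_0$ large enough, \cite[Lemma 10]{MonotoneAdversary} gives the following properties of this subgraph with high probability:
\begin{itemize}
\item its Laplacian satisfies $\lambda_{n-1}(L_{\mathcal{G}'}) \geq np/2$;
\item its degrees lie in $[np/2, 2np]$.
\end{itemize}
Hence the indicator weights of $\mathcal{G}'$ are feasible for the SDP (degree constraints $d_\text{min} \geq 1$, $d_\text{max}\leq 2np$, $w_{ij}\in[0,1]$). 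They achieve objective value at least $np/2$.

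\textbf{Step 2: MMWU output.} Since MMWU \cite[Algorithm 2]{MonotoneAdversary} returns a $1/2$-approximation of the SDP optimum, its output weights satisfy $\lambda_{n-1}(L^W)\geq np/4$ and still obey $d_\text{min}\geq 1$, $d_\text{max}\leq 2np$. One point needs checking: MMWU's guarantee may be stated with a slightly relaxed degree constraint, e.g. $d_\text{max}\leq 2np$ only up to constant factors. In that case we rescale the weights by a constant, which only changes the constants $c_i$. We should also confirm connectivity, which follows from $\lambda_{n-1}(L^W)>0$.

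\textbf{Step 3: sample-size condition and substitution.} With $\lambda:=\lambda_{n-1}(L^W)\geq np/4$, the requirement of \cref{thm:error-bounds-for-weighted-rank-centrality} becomes
\[
\frac{640h^2 n\log(n)p}{\lambda^2}\leq \frac{10240h^2\log n}{np}\leq \frac{10240h^2}{c_0}.
\]
This is at most $10240h^2\leq k$ when $c_0\geq 1$, so the hypothesis holds. Substituting $\lambda\geq np/4$ into the three terms gives
\begin{align}
\frac{1}{\lambda}\sqrt{\frac{n\log(n)p}{k}} &\lesssim \sqrt{\frac{\log n}{npk}}, \\
\frac{1}{\lambda^2}\sqrt{\frac{n^2\log(n)p^2}{k}} &\lesssim \frac{1}{np}\sqrt{\frac{\log n}{k}}, \\
\frac{1}{\lambda^3}\sqrt{\frac{n^4\log(n)p^3}{k}} &\lesssim \sqrt{\frac{n\log n}{(np)^3k}}.
\end{align}
The middle term is dominated by the first, since $1/(np)\leq 1/\sqrt{np}$ when $np\geq 1$. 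Absorbing it gives the stated bound with probability $1-O(n^{-5})$, after a union bound over the graph events and the event in \cref{thm:error-bounds-for-weighted-rank-centrality}.

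\textbf{Main obstacle.} The hard part is the independence issue. \cref{thm:error-bounds-for-weighted-rank-centrality} treats the weighted graph as deterministic. The MMWU weights depend only on $\mathcal{G}$, not on the comparison outcomes $Z_{ij}$, so we condition on $\mathcal{G}$ and on the weights and then apply the theorem to the comparison randomness. This makes the two probability events combine cleanly.
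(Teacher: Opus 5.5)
Your proposal is correct and takes essentially the paper's route. The paper derives this theorem directly from \cref{thm:error-bounds-for-weighted-rank-centrality} combined with the MMWU guarantee of \cite[Theorem 10]{MonotoneAdversary}, and your derivation of $\lambda_{n-1}(L^W)\geq np/4$ (monotone coupling via \cref{lem:recovering-properties-of-erdos-renyi-graphs}, the Erd\H{o}s-R\'{e}nyi spectral bound, then the $1/2$-approximation) is exactly the informal justification the paper gives before stating the theorem. Your sample-size check reproduces the constant $10240h^2=16\cdot 640h^2$, the substitution and the absorption of the middle term via $np\geq 1$ are right, and conditioning on $\mathcal{G}$ and its graph-only-dependent weights is the right way to invoke the deterministic-graph result.
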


\Cref{thm:error-bounds-for-weighted-rank-centrality-with-mmwu} follows from \cref{thm:error-bounds-for-weighted-rank-centrality} and \cite[Theorem 10]{MonotoneAdversary}. Note that the bound in \cref{thm:error-bounds-for-weighted-rank-centrality-with-mmwu} can be simplified when the graph is sufficiently dense, as the following corollary demonstrates.

\begin{corollary}[Error Bounds for MMWU-Weighted Spectral Method on Dense Graphs]\label{cor:error-bounds-mmwu-dense}
In addition to the assumptions of \cref{thm:error-bounds-for-weighted-rank-centrality-with-mmwu}, suppose that $np \geq c\sqrt{n}$ for some constant $c>0$. Then, the error bound simplifies to
\begin{equation}
\frac{\|\hat{\bm{\pi}}-\bm{\pi}\|_\infty}{\|\bm{\pi}\|_\infty} \leq C \sqrt{\frac{\log(n)}{npk}} 
\end{equation}
with high probability for some constant $C>0$.
\end{corollary}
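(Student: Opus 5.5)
The plan is to derive the corollary directly from the bound of \cref{thm:error-bounds-for-weighted-rank-centrality-with-mmwu}. Under its hypotheses, with probability at least $1-O(n^{-5})$,
\begin{equation}
\frac{\|\hat{\bm{\pi}}-\bm{\pi}\|_\infty}{\|\bm{\pi}\|_\infty} \leq c_1 \sqrt{\frac{\log(n)}{npk}} + c_2 \sqrt{\frac{n\log(n)}{(np)^3k}}.
\end{equation}
It then suffices to show that the second term is at most a constant multiple of the first once $np \geq c\sqrt{n}$. The extra density assumption does not interact with the probabilistic event, so the same high-probability event serves for the corollary.

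The key step is to compare the two terms through their ratio,
\begin{equation}
\frac{\sqrt{n\log(n)/((np)^3k)}}{\sqrt{\log(n)/(npk)}} = \sqrt{\frac{n \cdot np}{(np)^3}} = \frac{\sqrt{n}}{np}.
\end{equation}
The assumption $np \geq c\sqrt{n}$ makes this ratio at most $1/c$. Hence the second term is at most $(c_2/c)\sqrt{\log(n)/(npk)}$, and the total bound is at most $C\sqrt{\log(n)/(npk)}$ with $C = c_1 + c_2/c$.

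Two consistency points need checking. First, $np \geq c\sqrt{n}$ together with $np \geq c_0\log(n)$ is compatible with the theorem's hypotheses, since the corollary adds assumptions rather than replacing any. Second, the requirement $k \geq 10240h^2$ carries over unchanged. There is no real obstacle here. The only care needed is to confirm that the constants $c_1$ and $c_2$ from \cref{thm:error-bounds-for-weighted-rank-centrality-with-mmwu} do not depend on $p$ or $n$, so that $C$ is a genuine constant. The phrase ``with high probability'' is justified by the $1-O(n^{-5})$ guarantee inherited from the theorem.
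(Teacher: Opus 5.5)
Your proposal is correct and is the same argument the paper intends. The paper gives no separate proof and presents the corollary as a direct simplification of \cref{thm:error-bounds-for-weighted-rank-centrality-with-mmwu}. Your ratio computation $\sqrt{n}/(np) \le 1/c$, which gives $C = c_1 + c_2/c$ on the same $1-O(n^{-5})$ event, with $c_1, c_2$ depending only on $h$ and not on $n$ or $p$, is exactly that simplification.
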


Thus, the MMWU-weighted spectral method is asymptotically optimal for sufficiently dense graphs. Note that it is still possible for the spectral gap to decay towards zero even in such dense regimes. We emphasize that this theoretical result does not imply that the weighted spectral method is unviable for sparse semi-random graphs, but only that optimal bounds in such settings is yet unavailable. It is possible that alternate proof techniques may allow optimal bounds to be obtained for sparser regimes. We believe this to be a promising direction for future work.

\section{Analysis and Proof Sketches for Main Results}
In this section, we present brief proof sketches for each of our main results and state some necessary lemmas.
\subsection{Proof Sketch of \cref{thm:entry-wise-error-bounds-for-unweighted-rank-centrality}}\label{sec:proof-sketch-unweighted}
Before the entry-wise error of the unweighted spectral method can be bounded, a tight bound on the $\ell^2$ error is needed.
\begin{lemma}[$\ell^2$ Error Bounds for Unweighted Spectral Method {{\cite[Theorem 5.2]{Chenetal2019}}}]\label{lem:l2-error-bounds-for-unweighted-rank-centraliity}
Under the setting of \cref{thm:entry-wise-error-bounds-for-unweighted-rank-centrality}, with probability at least $1-O(n^{-10})$, we have
\begin{equation}
    \frac{\|\hat{\bm{\pi}}-\bm{\pi}\|_2}{\|\bm{\pi}\|_2} \leq \frac{c_3}{\gamma\sqrt{npk}}
\end{equation}
for some constant $c_3$.
\end{lemma}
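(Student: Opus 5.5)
We follow the perturbation argument behind \cite[Theorem 5.2]{Chenetal2019} (originating in \cite{NegahbanOhShah2017}), carried out in the $\bm{\pi}$-weighted norm so that the reversibility of $S$ can be used. Write $\Delta = \hat{S}-S$. Since $\hat{\bm{\pi}}^\tran\hat{S}=\hat{\bm{\pi}}^\tran$ and $\bm{\pi}^\tran S=\bm{\pi}^\tran$, we have $(\hat{\bm{\pi}}-\bm{\pi})^\tran = (\hat{\bm{\pi}}-\bm{\pi})^\tran S + \hat{\bm{\pi}}^\tran\Delta$. The vector $\hat{\bm{\pi}}-\bm{\pi}$ sums to zero. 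Because $S$ is reversible with respect to $\bm{\pi}$, its restriction to such vectors has $\|\cdot\|_{\bm{\pi}}$-operator norm at most $|\lambda_2(S)|$. This gives the standard bound
\begin{equation}
\|\hat{\bm{\pi}}-\bm{\pi}\|_{\bm{\pi}} \leq \frac{\|\bm{\pi}^\tran\Delta\|_{\bm{\pi}}}{1-|\lambda_2(S)|-\|\Delta\|_{\bm{\pi}}}.
\end{equation}
The remaining steps are, in order: (i) lower bound the spectral gap, (ii) bound $\|\Delta\|_{\bm{\pi}}$, (iii) bound $\|\bm{\pi}^\tran\Delta\|_{\bm{\pi}}$, and (iv) convert back to $\ell^2$ norms.

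\emph{Step (i), spectral gap.} By assumption \cref{eq:spectral-gap-assumption}, $1-|\lambda_2(S)|>\gamma$ with probability at least $1-n^{-5}$.

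\emph{Degree concentration.} The degrees are sums of independent Bernoullis with means $\sum_{j\neq i} q_{ij}\geq (n-1)p\geq c_0\log n$. A Chernoff bound and a union bound over the $n$ nodes show that every degree is within a constant factor of its mean with probability $1-O(n^{-10})$. In particular $d_{\text{min}}\gtrsim np$. We only need a lower bound on degrees relative to $d=d_{\text{max}}$ and absolute lower bounds of order $np$, so the fact that the $q_{ij}$ may be large causes no trouble here.

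\emph{Step (ii), bounding $\|\Delta\|_{\bm{\pi}}$.} Condition on $\mathcal{G}$. Decompose $\Delta$ into its off-diagonal part, with entries $(\hat{p}_{ij}-p_{ij})/d$, and its diagonal part. Each off-diagonal entry has variance $O(1/(kd^2))$. Matrix Bernstein gives $\|\Delta_{\text{off}}\|_2 \lesssim \sqrt{d_{\text{max}}\log n/(k d^2)}$. For the diagonal part, Hoeffding plus a union bound gives a bound of order $\sqrt{d_i\log n/k}/d$. Since $d\geq d_{\text{min}}\gtrsim np$, both pieces are $O(\sqrt{\log n/(npk)})$. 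Using $\|\cdot\|_{\bm{\pi}}\leq h\|\cdot\|_2$, the condition $np\geq c_0\log n$ with $c_0\geq 10240h^2/\gamma^2$ then makes $\|\Delta\|_{\bm{\pi}}\leq\gamma/2$.

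\emph{Step (iii), bounding $\|\bm{\pi}^\tran\Delta\|_{\bm{\pi}}$.} The $j$th coordinate is a sum of independent centered terms. Its variance is controlled by $\sum_{i\in N(j)}\pi_i^2/(kd^2)\lesssim h^2 d_j/(n^2kd^2)$. Summing over $j$ gives $\|\bm{\pi}^\tran\Delta\|_2\lesssim \|\bm{\pi}\|_2/\sqrt{npk}$ with high probability. This is the place where I expect the variation condition \cref{eq:norm-ratio-bound-assumption} to be needed. We require $\sum_j d_j/d^2 \lesssim n/(np)$, and this needs $d_{\text{max}}$ to be comparable to the typical degree. The vectors $\bm{q}_i$ control the expected degree $\|\bm{q}_i\|_1$. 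The ratio condition bounds how concentrated each $\bm{q}_i$ is, which keeps $\sum_j q_{ij}$, i.e.\ the $\bm{\pi}$-weighted exposure of each node, balanced relative to the $\ell^1$ mass.

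\emph{Step (iv), conclusion.} Combining (i)–(iii) gives $\|\hat{\bm{\pi}}-\bm{\pi}\|_{\bm{\pi}}\leq 2\|\bm{\pi}^\tran\Delta\|_{\bm{\pi}}/\gamma$. Converting back with the norm equivalences following \cref{eq:pi-norm-relation} (constants in $h$) yields $\|\hat{\bm{\pi}}-\bm{\pi}\|_2/\|\bm{\pi}\|_2\leq c_3/(\gamma\sqrt{npk})$. A union bound over all events gives failure probability $O(n^{-10})$.

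\emph{Main obstacle.} There is a mismatch between the probability $n^{-5}$ in the spectral gap hypothesis and the $n^{-10}$ claimed here. I would handle the spectral gap failure event separately, or read the lemma as conditional on the gap event. The most delicate technical step is Step (iii), because semi-random degrees can be highly heterogeneous.
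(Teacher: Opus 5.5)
Your proposal follows essentially the same route as the paper: the perturbation bound of \cite[Theorem 5.1]{Chenetal2019}, then $\|\hat{S}-S\|_{\bm{\pi}}\le h\|\hat{S}-S\|_2\lesssim\sqrt{\log(n)/(kd_{\text{max}})}\le\gamma/2$ via degree concentration, then $\|\bm{\pi}^\tran(\hat{S}-S)\|_2\lesssim\|\bm{\pi}\|_2/\sqrt{npk}$ as in \cite{Chenetal2019}, then norm conversion; the paper likewise works implicitly on the spectral-gap event, as you suggest. Two small corrections: the variation condition \cref{eq:norm-ratio-bound-assumption} plays no role in step (iii), since with $d=d_{\text{max}}$ one has $\sum_j d_j/d^2\le n/d_{\text{max}}\le 2/p$ for any degree profile (and that condition does not balance degrees anyway, because for the SBM it holds with $s=2m$ regardless of $q_m/p$); and summing coordinate variances only controls $\mathbb{E}\|\bm{\pi}^\tran(\hat{S}-S)\|_2^2$, so the log-free high-probability claim needs a norm-level concentration argument as in \cite{Chenetal2019}, because coordinatewise Hoeffding plus a union bound would lose a factor $\sqrt{\log(n)}$.
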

\Cref{lem:l2-error-bounds-for-unweighted-rank-centraliity} is proved in \cref{app:proof-l2-unweighted}. Note that simply using \cref{lem:l2-error-bounds-for-unweighted-rank-centraliity} in combination with the norm inequality $\|\cdot\|_\infty \leq \|\cdot\|_2 \leq \sqrt{n}\|\cdot\|_\infty$ is not sufficient to obtain the tight bound of \cref{thm:entry-wise-error-bounds-for-unweighted-rank-centrality}.

In order to obtain a tight bound, we decompose the entry-wise error to multiple components and make extensive use of concentration inequalities such as Hoeffding's inequality and Bernstein's inequality \cite{BoucheronLugosiBousquet2003} to bound the deviation of each component from the mean with high probability. A key component is the celebrated leave-one-out argument introduced by \cite{Chenetal2019}, which isolates the randomness of a single node to simplify analysis. Although the general flavor of our proof follows \cite{Chenetal2019}, the heterogeneity of our semi-random setting presents unique challenges, as our edges are not identically distributed.

At a high level, our proof decomposes the error term into
\begin{align}
\hat{\pi}_i-\pi_i &= \frac{1}{\boxed{\textstyle \sum_j \hat{p}_{ij}}} \biggl( \boxed{\textstyle\sum_j (\hat{p}_{ji}\pi_j-\hat{p}_{ij} \pi_i)} + \boxed{\textstyle \sum_j \hat{p}_{ji} (\hat{\pi}_j - \hat{\pi}^{(-i)}_j)} \\
&\quad + \boxed{\textstyle\sum_j \hat{p}_{ji} (\hat{\pi}^{(-i)}_j-\bar{\pi}^{(-i)}_j)} + \boxed{\textstyle\sum_j \hat{p}_{ji} (\bar{\pi}^{(-i)}_j - \pi_j)} \biggr),\label{eq:sketch-decomposition}
\end{align}
where the leave-one-out vector $\hat{\bm{\pi}}^{(-i)}$ is defined as the stationary distribution of the leave-one-out stochastic matrix
\begin{equation}
    (\hat{S}^{(-m)})_{ij} \triangleq \begin{cases}
        \hat{S}_{ij}=\frac{\hat{p}_{ij}}{d}, & \text{if } i \neq m \text{ and } j \neq m \text{ and } i\neq j \\
        \frac{p_{ij} q_{ij}}{d}, & \text{if (} i=m \text{ or } j=m \text{) and } i \neq j,\\
        1-\sum_{l:l\neq i} (\hat{S}^{(-m)})_{il}, & \text{if } i=j,
    \end{cases}
\end{equation}
and $\bar{\bm{\pi}}^{(-i)}$ is defined as
\begin{align}
    \bar{\pi}^{(-m)}_i &\triangleq \frac{\sum_{j\notin\{i,m\}} \hat{p}_{ji} \pi_j + p_{mi} q_{mi} \pi_m}{\sum_{j\notin\{i,m\}} \hat{p}_{ij} + p_{im} q_{im}}.
\end{align}
Each of the boxed terms in \cref{eq:sketch-decomposition} are bounded separately, allowing each source of error to be isolated. Finally, the terms, each bounded with high probability, are combined with a union bound to obtain the final bound with the desired probability.

\subsection{Proof Sketch of \cref{thm:error-bounds-for-weighted-rank-centrality}}\label{sec:proof-sketch-weighted}
Similarly to the proof of \cref{thm:entry-wise-error-bounds-for-unweighted-rank-centrality}, the entry-wise error is decomposed into multiple components that are bounded separately. However, the flavor of analysis is significantly different because the graph is fixed. In particular, we make use of various results on graph Laplacian spectra such as \cite[Lemma 18]{MonotoneAdversary} and matrix concentration inequalities \cite{Tropp2015} to characterize the relationship between error terms and Laplacian eigenvalues. In addition, we use ideas from, e.g., \cite{SpectralonGeneralMultiway} to recursively bound the $\ell^2$ and $\ell^\infty$ norms of various error terms to obtain a tight bound.
\subsection{Analysis of \cref{thm:error-bounds-for-weighted-rank-centrality-with-mmwu}}
In order to obtain results such as \cref{thm:error-bounds-for-weighted-rank-centrality-with-mmwu}, it is essential to formally characterize the relationship between an Erd\H{o}s-R\'{e}nyi graph and a semi-randomly generated graph with the same $p$ by defining a coupling between the two random variables.
\begin{definition}[Monotone Coupling Between Erd\H{o}s-R\'{e}nyi Graphs and Semi-Random Graphs]\label{def:maximal-coupling}
Given $n\in\mathbb{N}$, $p \in (0, 1]$, and $\{q_{ij} \in [p, 1] \mid \forall i,j\in\mathbb{N}: i<j\leq n\}$, for each pair $(i, j) \in [n]^2$ such that $i < j$, draw an independent uniformly distributed random variable $U_{ij} \sim \textsf{Unif}(0, 1)$. Construct two graphs, $\mathcal{G}_\text{E}=([n], \mathcal{E}_\text{E})$ and $\mathcal{G}_\text{S}=([n], \mathcal{E}_\text{S})$, such that $(i, j) \in \mathcal{E}_\text{E}$ if and only if $U_{ij} \leq p$ and $(i, j) \in \mathcal{E}_\text{S}$ if and only if $U_{ij} \leq q_{ij}$. The graphs $\mathcal{G}_\text{E}$ and $\mathcal{G}_\text{S}$ are realizations of the \emph{monotone coupling} between the random variables $G_\text{E}$, which is distributed according to the Erd\H{o}s-R\'{e}nyi model with edge sampling probability $p$, and $G_\text{S}$, which is distributed according to the semi-random model with edge sampling probabilities $\{q_{ij}\}$. Furthermore, $\mathbb{P}(G_\text{E} \subseteq G_\text{S})=1$. 
\end{definition}
Under this coupling, the edge set of the semi-randomly generated graph $\mathcal{G}_\text{S}$ is always a superset of the edge set of $\mathcal{G}_\text{E}$. This implies that given a realization of a semi-randomly generated graph $\mathcal{G}_\text{S}$, it is possible to probabilistically choose an Erd\H{o}s-R\'{e}nyi subgraph of $\mathcal{G}$ according to $\mathbb{P}(G_\text{E} \mid G_\text{S} = \mathcal{G}_\text{S})$. Critically, this allows us to show that any useful property of Erd\H{o}s-R\'{e}nyi graphs can be recovered by a carefully chosen subgraph of a semi-randomly generated graph, as the following lemma shows.
\begin{lemma}[Recovering Properties of Erd\H{o}s-R\'{e}nyi graphs from Subgraphs of Semi-Random Graphs]\label{lem:recovering-properties-of-erdos-renyi-graphs}
Let $\mathcal{P}$ be some graph property that holds for an Erd\H{o}s-R\'{e}nyi graph (with edge sampling probability $p$) with high probability, i.e., 
\begin{equation}
\mathbb{P}_{G_\text{E} \sim \mathsf{ER}(n,p)}(G_\text{E} \in \mathcal{P}) \geq 1 - O(n^{-t})
\end{equation}
for some fixed $t>0$.
Let $G_\text{S}$ be a semi-randomly generated graph under edge sampling probabilities $\{q_{ij}\}$. Then, with high probability, there must exist a subgraph of $G_\text{S}$ that satisfies $\mathcal{P}$, i.e.,
\begin{equation}
\mathbb{P}_{G_\text{S} \sim \mathsf{SR}(n,\{q_{ij}\})}(\exists \mathcal{G} \subseteq G_\text{S}: \mathcal{G} \in \mathcal{P}) \geq 1 - O(n^{-t}).
\end{equation}
\end{lemma}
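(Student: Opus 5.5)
The plan is to prove \cref{lem:recovering-properties-of-erdos-renyi-graphs} directly from the monotone coupling of \cref{def:maximal-coupling}. I would first place $G_\text{E}\sim\mathsf{ER}(n,p)$ and $G_\text{S}\sim\mathsf{SR}(n,\{q_{ij}\})$ on a common probability space through the uniforms $U_{ij}$. I would then check the two marginals. Since the $U_{ij}$ are independent, $\mathbb{P}((i,j)\in\mathcal{E}_\text{E})=\mathbb{P}(U_{ij}\le p)=p$ independently over pairs, so $G_\text{E}$ has exactly the Erd\H{o}s-R\'{e}nyi law. Likewise $\mathbb{P}((i,j)\in\mathcal{E}_\text{S})=q_{ij}$ independently, so $G_\text{S}$ has exactly the semi-random law. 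Finally, because $q_{ij}\ge p$, the condition $U_{ij}\le p$ implies $U_{ij}\le q_{ij}$, so $G_\text{E}\subseteq G_\text{S}$ holds surely and not merely almost surely.

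Next I would compare events. Define $A=\{\exists\,\mathcal{G}\subseteq G_\text{S}:\mathcal{G}\in\mathcal{P}\}$, an event that depends only on $G_\text{S}$. On the event $\{G_\text{E}\in\mathcal{P}\}$, the graph $\mathcal{G}=G_\text{E}$ is a subgraph of $G_\text{S}$ with the property, so $\{G_\text{E}\in\mathcal{P}\}\subseteq A$. Monotonicity of probability then gives
\begin{equation}
\mathbb{P}(A)\ \ge\ \mathbb{P}(G_\text{E}\in\mathcal{P})\ \ge\ 1-O(n^{-t}).
\end{equation}
Because $A$ is measurable with respect to $G_\text{S}$ alone, its probability under the coupling equals its probability under $\mathsf{SR}(n,\{q_{ij}\})$, which is the claim. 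Measurability is harmless here because the space of graphs on $[n]$ is finite.

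The only real subtlety is that the conclusion concerns the semi-random law alone, while the argument runs on the coupled space; the transfer is justified precisely because $A$ is a function of $G_\text{S}$. I would also remark that the subgraph witnessing $\mathcal{P}$ may depend on the hidden $U_{ij}$, and hence need not be computable from $G_\text{S}$. This is why the paper must later search for good weights algorithmically, via MMWU, rather than reading the subgraph off directly.
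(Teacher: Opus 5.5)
Your proof is correct and is essentially the paper's argument. The paper conditions on $G_\text{S}$, uses $\mathbb{P}(G_\text{E}\in\mathcal{P}\mid G_\text{S})\le \ind\{\exists\,\mathcal{G}\subseteq G_\text{S}:\mathcal{G}\in\mathcal{P}\}$, and then applies the tower property, which is just the conditional form of your event inclusion $\{G_\text{E}\in\mathcal{P}\}\subseteq A$ on the coupled space; your explicit checks of the two marginals and of the sure containment $G_\text{E}\subseteq G_\text{S}$ (from $q_{ij}\ge p$) fill in details that the paper leaves implicit in \cref{def:maximal-coupling}.
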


\begin{proof}
Given a graph $\mathcal{G}_\text{S}$, let $G_\text{E}$ be distributed according to $\mathbb{P}(G_\text{E}\mid G_\text{S}=\mathcal{G}_\text{S})$ under the coupling defined in \cref{def:maximal-coupling}. Note that $\mathbb{P}(G_\text{E} \subseteq \mathcal{G}_\text{S})=1$ by definition. Then,
\begin{equation}
\mathbb{P}_{G_\text{E}\mid G_\text{S}}(G_\text{E} \in \mathcal{P} \mid G_\text{S}=\mathcal{G}_\text{S}) \leq \ind\{\exists \mathcal{G} \subseteq \mathcal{G}_\text{S}: \mathcal{G} \in \mathcal{P}\}.
\end{equation}
Thus,
\begin{align}
&\phantom{{}={}}\mathbb{P}_{G_\text{S} \sim \mathsf{SR}(n,\{q_{ij}\})}(\exists \mathcal{G} \subseteq G_\text{S}: \mathcal{G} \in \mathcal{P})
\\ &= \mathbb{E}_{G_\text{S} \sim \mathsf{SR}(n,\{q_{ij}\})}[\ind\{\exists \mathcal{G} \subseteq G_\text{S}: \mathcal{G} \in \mathcal{P}\}] \\
&\geq \mathbb{E}_{G_\text{S} \sim \mathsf{SR}(n,\{q_{ij}\})}[\mathbb{P}_{G_\text{E}|G_\text{S}}(G_\text{E} \in \mathcal{P} \mid G_\text{S})] \\
&= \mathbb{P}_{G_\text{E} \sim \mathsf{ER}(n,p)}(G_\text{E} \in \mathcal{P}) \geq 1 - O(n^{-t}),
\end{align}
which completes the proof.
\end{proof}
Although this lemma establishes the existence of an appropriate subgraph with high probability, it does not provide an explicit method to recover one. Indeed, it is not trivial to recover an Erd\H{o}s-R\'{e}nyi subgraph from a semi-randomly generated graph, especially if the underlying edge sampling probabilities are unknown. Thankfully, the mere existence of a valid subgraph is sufficient to obtain results such as \cref{thm:error-bounds-for-weighted-rank-centrality-with-mmwu}.
\begin{figure*}[t]
    \centering
    \begin{subfigure}[t]{0.3\textwidth}
        \includegraphics[width=\linewidth]{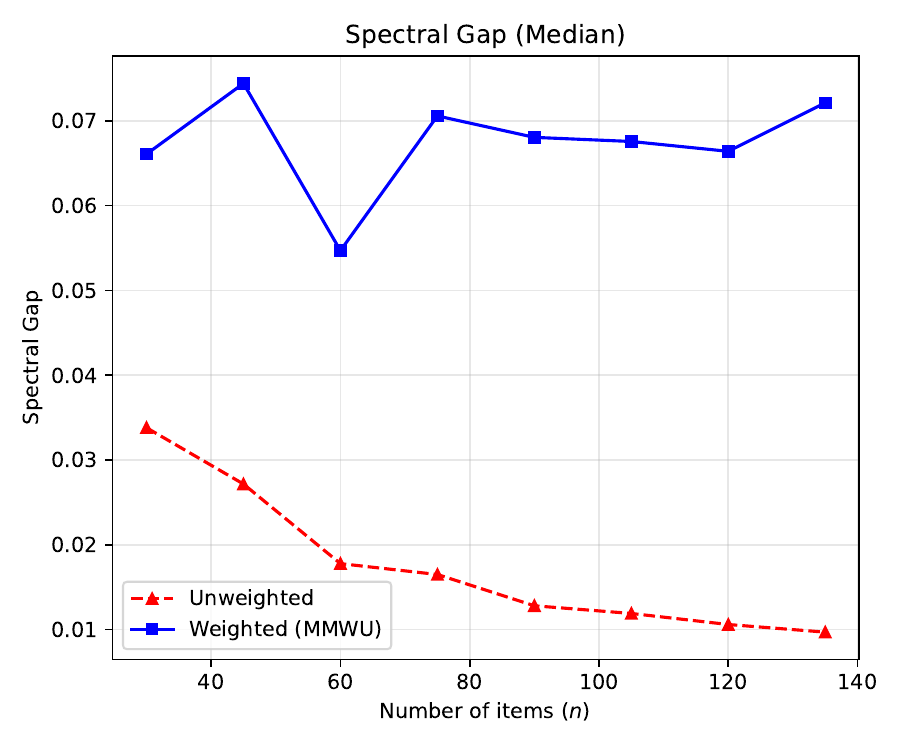}
        \caption{Spectral gaps ($1-|\lambda_2(S)|$) of the unweighted and MMWU-weighted canonical Markov matrices as a function of $n$ (Experiment 1).}
    \end{subfigure}
    \hspace{1em}
    \begin{subfigure}[t]{0.3\textwidth}
        \includegraphics[width=\linewidth]{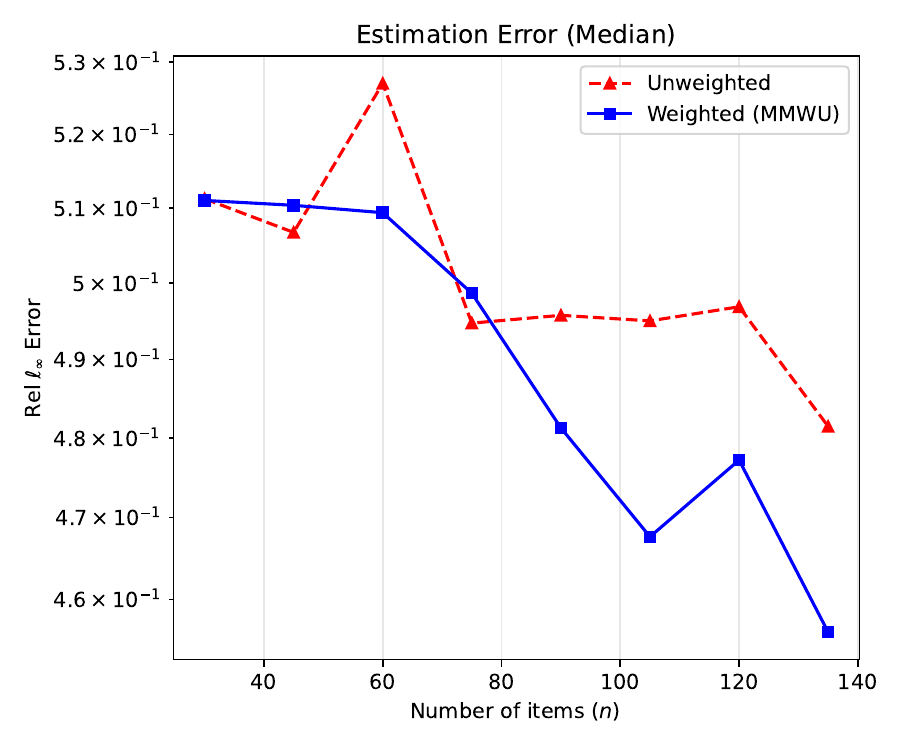}
        \caption{Relative $\ell^\infty$-error ($\frac{\|\hat{\bm{\pi}}-\bm{\pi}\|_\infty}{\|\bm{\pi}\|_\infty}$) of the unweighted and MMWU-weighted spectral method with respect to $n$ (Experiment 1).}
    \end{subfigure}
    \hspace{1em}
    \begin{subfigure}[t]{0.3\textwidth}
        \includegraphics[width=\linewidth]{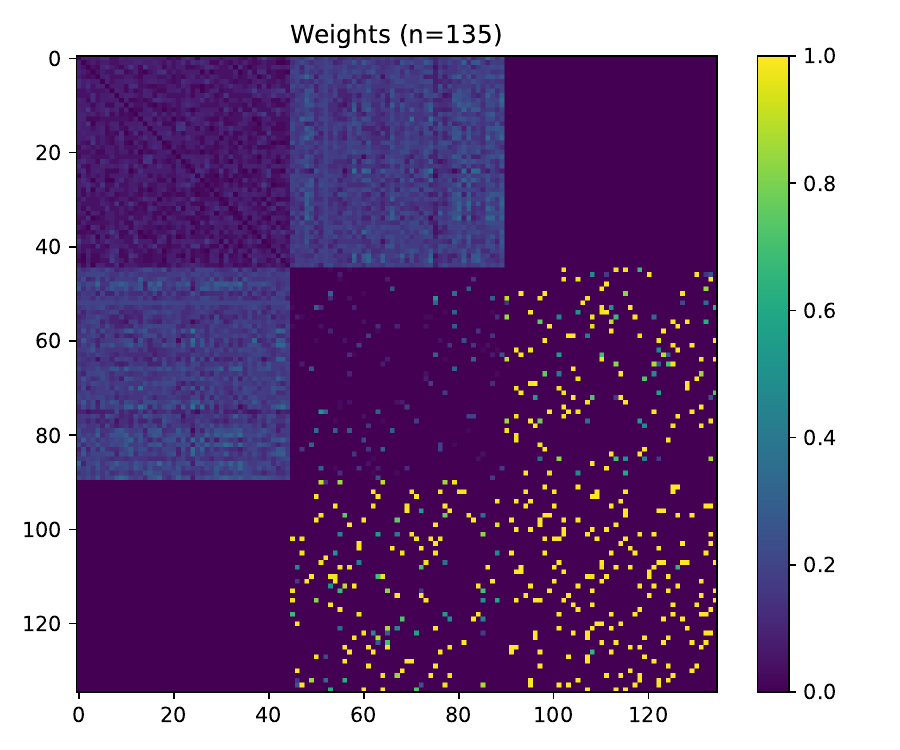}
        \caption{Visualization of the MMWU-computed weights in Experiment 1 ($n=135$).}
    \end{subfigure}
    \begin{subfigure}[t]{0.3\textwidth}
        \includegraphics[width=\linewidth]{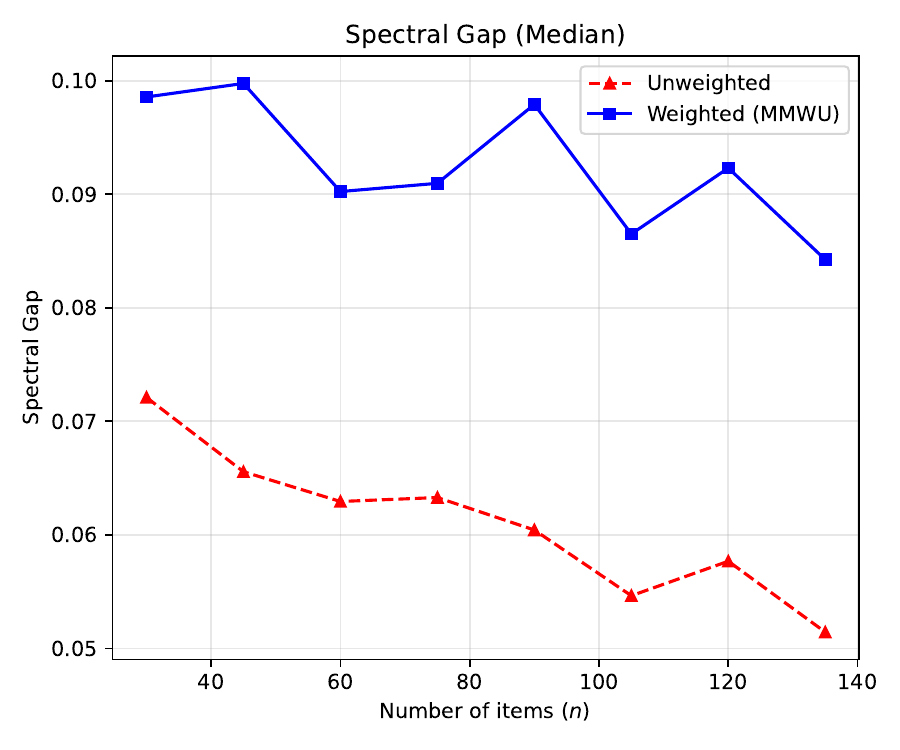}
        \caption{Spectral gaps ($1-|\lambda_2(S)|$) of the unweighted and MMWU-weighted canonical Markov matrices as a function of $n$ (Experiment 2).}
    \end{subfigure}
    \hspace{1em}
    \begin{subfigure}[t]{0.3\textwidth}
        \includegraphics[width=\linewidth]{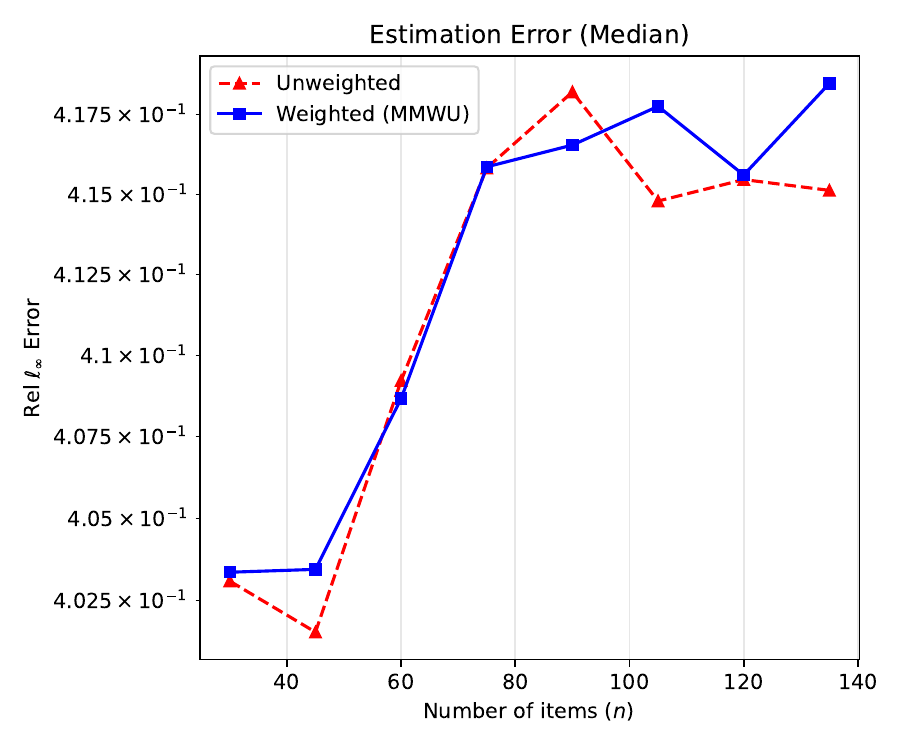}
        \caption{Relative $\ell^\infty$-error ($\frac{\|\hat{\bm{\pi}}-\bm{\pi}\|_\infty}{\|\bm{\pi}\|_\infty}$) of the unweighted and MMWU-weighted spectral method with respect to $n$ (Experiment 2).}
    \end{subfigure}
    \hspace{1em}
    \begin{subfigure}[t]{0.3\textwidth}
        \includegraphics[width=\linewidth]{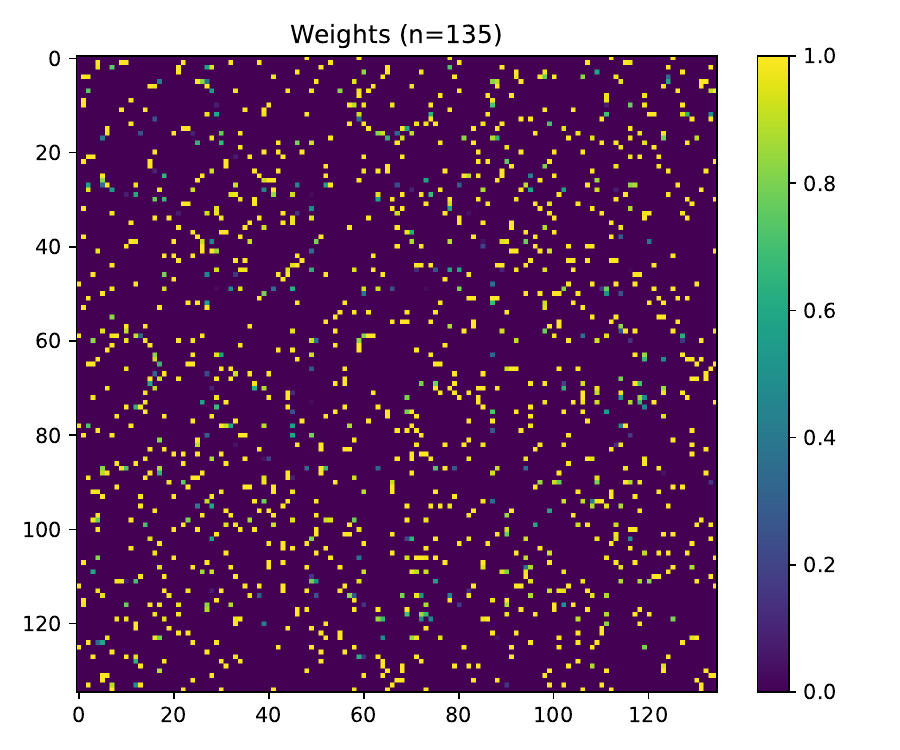}
        \caption{Visualization of the MMWU-computed weights in Experiment 2 ($n=135$).}
    \end{subfigure}
    \caption{\ul{Experiment 1 (Subfigures (a)--(c)):} $3$-block SBM with edge sampling probabilities between each pair of blocks given by the matrix \cref{eq:sbm-matrix}. \ul{Experiment 2 (Subfigures (d)--(f)):} Erd\H{o}s-R\'{e}nyi model with edge sampling probability $p = 2\log(n)/n$.}
    \label{fig:experiments}
    \Description[(a)--(c): plots on SBM dataset. (d)--(f): plots on Erd\H{o}s-R\'{e}nyi model dataset.]{(a) Line plot that compares the spectral gaps of the unweighted and weighted canonical Markov matrices on SBM data distributed according to \cref{eq:sbm-matrix}. The $x$ axis is the number of items ($n$) and the $y$ axis is the spectral gap. The spectral gap of the unweighted matrix is around $0.03$ when $n=30$ and decreases to around $0.01$ when $n=135$, while the spectral gap of the weighted matrix stays relatively constant at around $0.07$ regardless of $n$. (b) Line plot that compares the $\ell^\infty$ estimation error of the unweighted and weighted spectral methods on SBM data. The $x$ axis is the number of items ($n$) and the $y$ axis is the relative $\ell^\infty$ error. The relative estimation error decreases as the number of items increases for both the unweighted and weighted methods, but the decrease is more pronounced for the weighted method. When $n=30$, both methods have a similar relative error of around $0.51$, but when $n=135$, the relative error of the weighted method drops to around $0.46$, whereas it is around $0.48$ for the unweighted method. (c) Heatmap of the MMWU weights assigned to each edge of the SBM graph. The heatmap consists of multiple colored dots, where each dot at $(x, y)$ corresponds to the edge between items $x$ and $y$, and the color represents the weight, which is between $0$ and $1$. Dots near the lower right, which correspond to the sparser part of the SBM, are brighter (i.e., weighted closer to $1$), while dots near the upper left, which correspond to denser parts of the SBM, are darker (i.e., weighted closer to $0$). (d) Line plot that compares the spectral gaps of the unweighted and weighted canonical Markov matrices on an Erd\H{o}s-R\'{e}nyi dataset. Both plots tend downwards, but there is a clear gap between them. The spectral gap of the unweighted algorithm decreases from around $0.07$ to $0.05$ as $n$ increases from $30$ to $135$, while the spectral gap of the weighted algorithm changes from around $0.1$ to $0.08$. (e) Line plot that compares the $\ell^\infty$ estimation error of the unweighted and weighted spectral methods on Erd\H{o}s-R\'{e}nyi data. The two methods behave very similarly, and it is not clear which one is more effective. They both exhibit an error of about $0.4$ when $n=30$, which jumps to around $0.415$ as $n$ increases to $90$ and stays relatively constant beyond that point. (f) Heatmap of the MMWU weights assigned to each edge of the Erd\H{o}s-R\'{e}nyi graph. The assigned weights are relatively uniform, and no clear pattern is visible.}
\end{figure*}

\section{Experiments}\label{sec:experiments}
In this section, we conduct two numerical experiments to complement our theoretical results. The purpose of the experiments is to highlight a setting where reweighting is beneficial, and another where it is not. In each experiment, we compare the performance of the unweighted and MMWU-weighted spectral methods on random graphs of various sizes. To reduce the effect of randomness, we run each experiment $25$ times and take the median. The algorithms were implemented on Python, and the greedy $1/2$-approximation oracle described in \cite[Theorem 5]{MonotoneAdversary} was used to implement the MMWU algorithm.

\textbf{Experiment 1.} In the first experiment (\cref{fig:experiments}(a)--(c)), graphs are generated from a stochastic block model. Although we only considered SBMs with a single inter-block probability in our theoretical exposition in \cref{subsec:sbm}, we will use the more general definition of a stochastic block model here, where the edge probabilities between pairs of blocks are given as an arbitrary symmetric matrix. The $3$-block SBM used in this experiment has block edge sampling probabilities given by the matrix

\begin{equation}
    P=\begin{bmatrix}
        1 & 1 & 0 \\
        1 & 2 \frac{\log(n)}{n} & 2 \frac{\log(n)}{n} \\
        0 & 2 \frac{\log(n)}{n} & 2 \frac{\log(n)}{n}
    \end{bmatrix},\label{eq:sbm-matrix}
\end{equation}
where $n\in\{30, 45, 60, 75, 90, 105, 120, 135\}$ is the number of items (note that we choose multiples of $3$ to ensure that each block has the same size). Note that this SBM is a valid semi-random model (albeit with $p=0$), but does not satisfy the assumptions of our main theoretical results. Of course, as the simulation results show, this does not preclude the MMWU-weighted spectral method from being effective.

The setting of Experiment 1 has several properties that make reweighting especially worthwhile. For example, the node degrees exhibit large variation. While nodes in the first block each have degree $\Theta(n)$, the average degree of nodes in the third block is only $\Theta(\log(n))$. Thus, the ratio $d_{\text{max}}/d_{\text{min}}$ is not bounded as $n$ increases. In such settings, reweighting is beneficial because it effectively normalizes node degrees, improving the spectral properties of the stochastic matrix significantly. In addition, as \cref{fig:experiments}(a) shows, the spectral gap of the unweighted canonical Markov matrix decays to zero as $n$ increases in this model. MMWU reweighting counteracts this decrease to keep the spectral gap relatively constant with respect to $n$.

\Cref{fig:experiments}(b) plots the relative entry-wise error of the unweighted and weighted spectral methods on this model as a function of $n$. As the plot shows, the MMWU-weighted spectral method outperforms the standard spectral method, especially as $n$ increases. \Cref{fig:experiments}(c), a heatmap of the edge weights assigned by the MMWU algorithm, gives us a glimpse into the inner workings of the reweighting process. Regions that are overrepresented (the top left corner) are assigned smaller weights, while samples that are underrepresented are given higher precedence.

\textbf{Experiment 2.} On the other hand, the second experiment (\cref{fig:experiments}(d)--(f)) serves as an example of a regime where reweighting is relatively unnecessary. In this experiment, the graph is generated from an Erd\H{o}s-R\'{e}nyi model with edge sampling probability $p=2\log(n)/n$ (with $n$ taking the same values as before). Since Erd\H{o}s-R\'{e}nyi graphs are known to have good spectral properties, the benefit of reweighting is relatively minor.

Interestingly, as \cref{fig:experiments}(d) shows, the MMWU method is still effective in increasing the spectral gap of the stochastic matrix, even though the unweighted spectral gap is already well controlled. However, this increase in spectral gap does not translate to improved performance, as \cref{fig:experiments}(e) illustrates. Since the purpose of the reweighting is to recover Erd\H{o}s-R\'{e}nyi-like properties, it is expected that it is not of much use if the initial graph is already generated from an Erd\H{o}s-R\'{e}nyi model. Furthermore, \cref{fig:experiments}(f) demonstrates, the weights chosen by the MMWU algorithm in this setting are mostly evenly distributed, unlike \cref{fig:experiments}(c). Still, even in this ``worst case'' setting, the weighted spectral method still holds its ground against the unweighted version.

\begin{acks}
This work is supported in part by the \grantsponsor{ccf-2337808}{National Science Foundation (NSF)}{https://www.nsf.gov/} CAREER Award under Grant \grantnum{ccf-2337808}{CCF-2337808}.
\end{acks}

\bibliographystyle{ACM-Reference-Format}
\bibliography{refs}

\appendix

\newcommand{\whprel}[2]{\stackrel[\mathclap{#1}]{}{\boxed{#2}}}
\newcommand{\awhprel}[2]{\!\!\whprel{#1}{#2}}

\newcommand\labelawhprel[3]{%
  \begingroup
    \refstepcounter{relctr}%
    \!\!\stackrel[\mathclap{#1}]{\smash{\textnormal{(\alph{relctr})}}}{\boxed{#2}}%
    \originallabel{#3}%
  \endgroup
}
\section{Introduction to Appendices}
In the appendices, we prove the various theorems and propositions introduced in the body. We use the shorthand
\begin{equation}
a \whprel{1-O(n^{-t})}{\geq} b
\end{equation}
to say ``$a \geq b$ with probability at least $1-O(n^{-t})$''.

\section{Proofs on Unweighted Spectral Method}\label{app:proof-entry-wise-error-bounds-for-unweighted}
\subsection[Proof of Lemma 5.1]{Proof of \cref{lem:l2-error-bounds-for-unweighted-rank-centraliity}}\label{app:proof-l2-unweighted}
First, we prove \cref{lem:l2-error-bounds-for-unweighted-rank-centraliity}.
\begin{proof}
We begin by invoking \cite[Theorem 5.1]{Chenetal2019} to get
\begin{equation}
    \|\bm{\pi}-\hat{\bm{\pi}}\|_2 \leq \sqrt{nh} \|\bm{\pi}-\hat{\bm{\pi}}\|_{\bm{\pi}} \leq  \frac{\sqrt{nh}\ \boxed{\|\bm{\pi}^\tran (S-\hat{S})\|_{\bm{\pi}}}\mathrlap{\ \circled{a}}}{\boxed{1-|\lambda_2(S)|-\|S-\hat{S}\|_{\bm{\pi}}}\mathrlap{\ \circled{b}}}.
\end{equation}
First, we lower bound \circled{b}. From \cite[Lemma 3]{NegahbanOhShah2017}, we have
\begin{equation}
\|S-\hat{S}\|_{\bm{\pi}} \leq h \|S-\hat{S}\|_2 \whprel{1-O(n^{-10})}{\leq} 80h \sqrt{\frac{\log(n)}{k d_{\text{max}}}}.
\end{equation}
Furthermore, we can use a Chernoff bound to show that $d_{\text{max}} \geq np/2 \geq c_0 \log(n)/2$ with probability at least $1-O(n^{-10})$ (see proof of \cref{thm:entry-wise-error-bounds-for-unweighted-rank-centrality} below). Thus,
\begin{equation}
    \circled{b} \whprel{1-O(n^{-10})}{\geq} \gamma - 80h \sqrt{\frac{\log(n)}{k d_{\text{max}}}} \quad \whprel{1-O(n^{-10})}{\geq}\  \gamma-80h\sqrt{\frac{2}{c_0 k}} \geq \frac{\gamma}{2}.
\end{equation}
Next, we upper bound \circled{a}. Following the proof of \cite[Theorem 5.2]{Chenetal2019}, we have
\begin{equation}
\circled{a} \leq \sqrt{\frac{h}{n}} \|\bm{\pi}^\tran (S-\hat{S})\|_2 \whprel{1-O(n^{-10})}{\lesssim}\ \sqrt{\frac{1}{n}} \frac{\|\bm{\pi}\|_2}{\sqrt{npk}}.
\end{equation}
Combining the bounds, we have
\begin{equation}
\frac{\|\bm{\pi}-\hat{\bm{\pi}}\|_2}{\|\bm{\pi}\|_2}\quad \whprel{1-O(n^{-10})}{\lesssim}\quad \frac{1}{\gamma\sqrt{np k}},
\end{equation}
completing the proof.
\end{proof}
\subsection[Proof of Theorem 3.1]{Proof of \cref{thm:entry-wise-error-bounds-for-unweighted-rank-centrality}} \label{app:unweighted-linf-proof}
Now we are ready to prove \cref{thm:entry-wise-error-bounds-for-unweighted-rank-centrality}.
\begin{proof}
Before we begin, we will first condition on the event that $1-\lambda_2(S)\geq \gamma$, which by assumption has probability at least $1-n^{-5}$.

First, let $A$ be the adjacency matrix of the observed $\mathcal{G}$. In particular, this implies that
\begin{equation}
    \mathbb{E}[A_{ij}] = q_{ij}
\end{equation}
and
\begin{equation}
    \mathbb{E}[\hat{p}_{ij} \mid A_{ij} = a] = p_{ij} a,\enspace \forall a \in \{0, 1\}.
\end{equation}
We will use the shorthands $\mathbb{P}(\cdot \mid A)$ and $\mathbb{E}[\cdot \mid A]$ to refer to probabilities and expectations conditioned on $A$. We also define
\begin{equation}
    d_i \triangleq \sum_{j} A_{ij}
\end{equation}
to be the degree of node $i$. Since each $A_{ij}\sim \textsf{Bernoulli}(q_{ij})$, using a Chernoff bound, we have
\begin{align}
\mathbb{P}\left(\left|d_i-\sum_j q_{ij}\right| \geq \frac{1}{2} \sum_j q_{ij}\right) &\leq 2\exp\left(-\frac{\sum_j q_{ij}}{12}\right) \\
&\leq 2 \exp\left(-\frac{(n-1)p}{12}\right) \\
&\leq 3 \exp\left(-\frac{c_0}{12} \log(n)\right) \\&\leq 3n^{-10} \label{eq:di-bounds}
\end{align}
since $c_0\geq 120$. In particular, since all $q_{ij}\geq p$, we have $d_i \geq np/2 \geq c_0 \log(n)/2$ with high probability.
Next, recall that since $\hat{\bm{\pi}}$ is the stationary distribution of $\hat{S}$, we have
\begin{equation}
\hat{\pi}_i = \frac{\sum_j \hat{p}_{ji} \hat{\pi}_j}{\sum_j \hat{p}_{ij}}.
\end{equation}
Note that $\mathbb{E}[\hat{p}_{ji}]=p_{ji} q_{ji}$.

We introduce the auxiliary vector $\bar{\bm{\pi}}$ as
\begin{equation}
\bar{\pi}_i = \frac{\sum_j \hat{p}_{ji} \pi_j}{\sum_j \hat{p}_{ij}}.
\end{equation}

Next, we decompose the entry-wise error as
\begin{align}
\overset{\mathclap{\circled{x}=\|\hat{\bm{\pi}}-\bm{\pi}\|_\infty}}{\boxed{\hat{\pi}_i-\pi_i}} &= \overset{\triangleq \delta_i}{\boxed{\hat{\pi}_i-\bar{\pi}_i}} + \bar{\pi}_i-\pi_i \label{eq:unweighted-delta-def}\\
&= \delta_i + \frac{\boxed{\textstyle\sum_j (\hat{p}_{ji}\pi_j-\hat{p}_{ij} \pi_i)}\mathrlap{\ \circled{a}}}{\boxed{\textstyle \sum_j \hat{p}_{ij}}\mathrlap{\ \circled{b}}}.
\end{align}

First, we bound \circled{a} with high probability.
\begin{align}
    &\phantom{{}={}} \mathbb{P}\left(|\circled{a}| \leq \|\bm{\pi}\|_\infty \sqrt{\frac{20d_i\log(n)}{k}}\ \middle|\ A \right) \\
    &\geq \left(1-2\exp\left(-\frac{2\left(\|\bm{\pi}\|_\infty \sqrt{\frac{20 d_i \log(n)}{k}}\right)^2}{\sum_j A_{ij}(\pi_i+\pi_j)^2 / k}\right)\right) \\
    &\geq 1-2n^{-10}.
\end{align}

Next, we lower bound \circled{b}.
\begin{align}
    &\phantom{{}={}} \mathbb{P}\left(\circled{b} \geq \frac{d_i}{2(1+h)}\ \middle|\ A \right) \\
    &\geq \mathbb{P}\left(k\ \circled{b} \geq \frac{k}{2} \sum_j A_{ij} p_{ij}\ \middle|\ A \right) \\
    &\geq 1-\exp\left(-\frac{k \sum_j A_{ij} p_{ij}}{8}\right) \\
    &\geq 1 - \exp\left(-\frac{kd_i}{8(1+h)}\right)\\
    &\geq 1 - \exp\left(-\frac{knp}{16(1+h)}\right) \\
    &= 1 - \exp\left(-\frac{c_0k \log(n)}{16(1+h)}\right) \geq 1-n^{-10}
\end{align}
for sufficiently large $c_0$.

Thus,
\begin{align}
\left|\frac{\circled{a}}{\circled{b}}\right| \quad\  &\awhprel{1-O(n^{-10})}{\leq}\  \frac{\|\bm{\pi}\|_\infty \sqrt{\frac{20d_i\log(n)}{k}}}{ \frac{d_i}{2(1+h)}}\\
&=2(1+h)\|\bm{\pi}\|_\infty \sqrt{\frac{20\log(n)}{d_i k}} \\
&\awhprel{1-O(n^{-10})}{\leq} 2(1+h)\|\bm{\pi}\|_\infty \sqrt{\frac{40\log(n)}{npk}}.
\end{align}

Next, we analyze $\delta_i$, which was defined at \cref{eq:unweighted-delta-def}. To do so, we introduce $\hat{S}^{(-m)}$, a leave-one-out version of $\hat{S}$ with the $m$th row and column replaced by their expected values, i.e.,
\begin{equation}
    (\hat{S}^{(-m)})_{ij} \triangleq \begin{cases}
        \hat{S}_{ij}=\frac{\hat{p}_{ij}}{d}, & \text{if } i \neq m \text{ and } j \neq m \text{ and } i\neq j \\
        \frac{p_{ij} q_{ij}}{d}, & \text{if (} i=m \text{ or } j=m \text{) and } i \neq j,\\
        1-\sum_{l:l\neq i} (\hat{S}^{(-m)})_{il}, & \text{if } i=j.
    \end{cases}
\end{equation}
Let $\hat{\bm{\pi}}^{(-m)}$ be the stationary distribution of $\hat{S}^{(-m)}$. Similarly, let $\bar{\bm{\pi}}^{(-m)}$ be the leave-one-version of $\bar{\bm{\pi}}$, i.e.,
\begin{equation}
    \bar{\pi}^{(-m)}_i \triangleq \frac{\sum_{j\notin\{i,m\}} \hat{p}_{ji} \pi_j + p_{mi} q_{mi} \pi_m}{\sum_{j\notin\{i,m\}} \hat{p}_{ij} + p_{im} q_{im}}.
\end{equation}
Finally, $d^{(-m)}_i$ is the leave-one-out counterpart of $\delta_i$,
\begin{equation}
    \delta^{(-m)}_i \triangleq \hat{\pi}^{(-m)}_i-\bar{\pi}^{(-m)}_i.
\end{equation}
Then, we can decompose $\delta_i$ as
\begin{align}
\delta_i = \frac{1}{\underset{\circled{b}}{\boxed{\textstyle \sum_j \hat{p}_{ij}}}} \biggl (&\underset{\circled{1}}{\boxed{\textstyle \sum_j \hat{p}_{ji} (\hat{\pi}_j - \hat{\pi}^{(-i)}_j)}} + \underset{\circled{2}}{\boxed{\textstyle\sum_j \hat{p}_{ji} \delta^{(-i)}_j}} \\
&+ \underset{\circled{3}}{\boxed{\textstyle\sum_j \hat{p}_{ji} (\bar{\pi}^{(-i)}_j - \pi_j)}} \biggr).
\end{align}

First, we bound \circled{1}.
\begin{equation}
    \circled{1} \leq \sqrt{\sum_j \hat{p}^2_{ji}} \|\hat{\bm{\pi}}-\hat{\bm{\pi}}^{(-i)}\|_2 \leq \sqrt{d_i}\ \underset{\circled{c}}{\boxed{\|\hat{\bm{\pi}}-\hat{\bm{\pi}}^{(-i)}\|_2}}.
\end{equation}

Next, we focus on \circled{3}.
\begin{equation}
\circled{3} = \sum_j \hat{p}_{ji} \ \frac{\boxed{\textstyle \sum_{l\notin\{i,j\}} \hat{p}_{lj} \pi_l -\hat{p}_{jl} \pi_j}\mathrlap{\ \circled{a}'}}{\boxed{\textstyle\sum_{l\notin\{i,j\}} \hat{p}_{jl} + p_{ji} q_{ji}}\mathrlap{\ \circled{b}'}}.
\end{equation}

Observe that $\circled{a}'$ and $\circled{b}'$ are leave-one-out versions of \circled{a} and \circled{b} respectively. Thus, we can similarly conclude that
\begin{equation}
|\circled{a}'| \whprel{1-O(n^{-10})}{\leq} \|\bm{\pi}\|_\infty \sqrt{\frac{20(d_j-1)\log(n)}{k}}
\end{equation}
and
\begin{equation}
\circled{b}'\  \whprel{1-O(n^{-10})}{\geq}\quad \frac{d_j-1}{2(1+h)}.
\end{equation}

Thus,
\begin{align}
\circled{3} &= \sum_j \left(\hat{p}_{ji} \frac{\circled{a}'}{\circled{b}'}\right) \leq \overset{\circled{b}}{\boxed{\textstyle\sum_j \hat{p}_{ji}}} \max_j \frac{|\circled{a}'|}{\circled{b}'} \\
&\awhprel{1-O(n^{-9})}{\leq} 2 \ \circled{b} \ (1+h) \|\bm{\pi}\|_\infty \sqrt{\frac{20\log(n)}{(d_i-1)k}} \\
&\awhprel{1-O(n^{-10})}{\leq} 2 \ \circled{b} \ (1+h) \|\bm{\pi}\|_\infty \sqrt{\frac{80\log(n)}{npk}},
\end{align}
assuming that $np\geq 4$ (which will hold for sufficiently large $n$).

Now, we focus on \circled{2}. To bound \circled{2} with high probability, our strategy will be to first bound the expectation of \circled{2}, then show that with high probability, the realization of \circled{2} cannot deviate from the expectation too much. The expectation will be taken with respect to the connectivity of node $i$ only. Thus, it will be conditioned on $\mathcal{G}^{(-i)}$, the graph $\mathcal{G}$ without node $i$. We also condition on the comparison data between $i$ and every other node $j \neq i$, $\tilde{p}_{ji}$. Then, we have that $(\hat{p_{ji}}\mid \mathcal{G}^{(-i)}, \tilde{\bm{p}}_{i}) \sim \textsf{Bernoulli}(q_{ji})$.

Under this setting, let us first analyze $\mathbb{E}[\circled{2} \mid \mathcal{G}^{(-i)}, \tilde{\bm{p}}_{i}]$.
\begin{align}
&\phantom{{}={}}\mathbb{E}[\circled{2} \mid \mathcal{G}^{(-i)}, \tilde{\bm{p}}_{i}] \\&= \sum_j \tilde{p}_{ji} q_{ji} \delta_j^{(-i)} \\
&\leq \underset{\circled{d}}{\boxed{\sqrt{\sum_j q_{ji}^2}}}\  \|\hat{\bm{\pi}}^{(-i)}-\bar{\bm{\pi}}^{(-i)}\|_2 \\
&\leq \circled{d} \left(\underset{\circled{c}}{\boxed{\|\hat{\bm{\pi}}^{(-i)}-\hat{\bm{\pi}}\|_2}}+\underset{\cref{lem:l2-error-bounds-for-unweighted-rank-centraliity}}{\boxed{\|\hat{\bm{\pi}}-\bm{\pi}\|_2}}+\|\bm{\pi}-\bar{\bm{\pi}}^{(-i)}\|_2\right) \\
&\awhprel{1-O(n^{-10})}{\leq}\  \circled{d} \left(\circled{c}+\frac{c_3 \sqrt{n} \|\bm{\pi}\|_\infty}{\gamma \sqrt{np k}}+\sqrt{n}\  \underset{\max \frac{\left|\tcircled{a}\right|'}{\tcircled{b}'}}{\boxed{\|\bm{\pi}-\bar{\bm{\pi}}^{(-i)}\|_\infty}}\right) \\
&\awhprel{1-O(n^{-9})}{\leq}\  \circled{d} \left(\circled{c}+ \frac{\|\bm{\pi}\|_\infty}{\sqrt{pk}}\left(\frac{c_3}{\gamma}+8\sqrt{5}(1+h)\sqrt{\log(n)}\right) \right).\label{eq:unweighted-exp2}
\end{align}
Next, \cref{eq:norm-ratio-bound-assumption} implies that
\begin{equation}
\circled{d} \leq s \frac{\sum_j q_{ji}}{\sqrt{n}}\quad \whprel{1-O(n^{-10})}{\leq}\quad \frac{2sd_i}{\sqrt{n}}.
\end{equation}

Next, we bound the deviation of \circled{2} from the expectation with high probability.
\begin{align}
&\phantom{{}={}}\mathbb{P}\Biggl(\left|\circled{2} - \mathbb{E}[\circled{2} \mid \mathcal{G}^{(-i)}, \tilde{\bm{p}}_{i}]\right| \geq\\
&\qquad \underset{\circled{e}}{\boxed{\|\bm{\delta}^{(-i)}\|_\infty\left(2\sqrt{5}\sqrt{\sum_j q_{ji}(1-q_{ji}) \log(n)} + \frac{20}{3}\log(n)\right)}}\ \Biggr) \\
&=\mathbb{P}\left(\left|\sum_j \tilde{p}_{ji} (A_{ji}-q_{ji}) \delta_j^{(-i)}\right| \geq \circled{e}\right) \\
&\leq 2\exp\left(-\frac{\frac{1}{2}\circled{e}^2}{\sum_j q_{ji}(1-q_{ji}) \left(\tilde{p}_{ji} \delta_j^{(-i)} \right)^2+\frac{1}{3}\|\bm{\delta}^{(-i)}\|_\infty\circled{e}}\right) \\
&\leq 2\exp\left(-\frac{\frac{1}{2}\circled{e}^2}{\left(\sum_j q_{ji}(1-q_{ji})\right) \|\bm{\delta}^{(-i)}\|_\infty^2+\frac{1}{3}\|\bm{\delta}^{(-i)}\|_\infty\circled{e}}\right)\\
&= 2\exp\left(-\frac{\frac{1}{2}\left(2\sqrt{5}\sqrt{\sum_j q_{ji}(1-q_{ji}) \log(n)} + \frac{20}{3}\log(n)\right)^2}{\splitfrac{\sum_j q_{ji}(1-q_{ji})}{ +\frac{1}{3}\left(2\sqrt{5}\sqrt{\sum_j q_{ji}(1-q_{ji}) \log(n)} + \frac{20}{3}\log(n)\right)}}\right) \\
&\leq 2\exp\left(-10\log(n)\right)=2 n^{-10}.
\end{align}
Since
\begin{equation}
\sqrt{\sum_j q_{ji}(1-q_{ji})} \leq \sqrt{\sum_j q_{ji}}\ \whprel{1-O(n^{-10})}{\leq}\  \sqrt{2d_i},
\end{equation}
we have
\begin{align}
&\left|\circled{2} - \mathbb{E}[\circled{2} \mid \mathcal{G}^{(-i)}, \tilde{\bm{p}}_{i}]\right| \\&\qquad\whprel{1-O(n^{-10})}{\leq} \|\bm{\delta}^{(-i)}\|_\infty\left(2\sqrt{5}\sqrt{2d_i \log(n)} + \frac{20}{3}\log(n)\right). \label{eq:unweighted-var2}
\end{align}

Combining \cref{eq:unweighted-exp2} and \cref{eq:unweighted-var2}, we have
\begin{align}
\circled{2} &\leq \mathbb{E}[\circled{2} \mid \mathcal{G}^{(-i)}, \tilde{\bm{p}}_{i}] + \left|\circled{2} - \mathbb{E}[\circled{2} \mid \mathcal{G}^{(-i)}, \tilde{\bm{p}}_{i}]\right| \\
&\awhprel{1-O(n^{-9})}{\leq}\  \frac{2sd_i}{\sqrt{n}}\left(\circled{c}+ \frac{\|\bm{\pi}\|_\infty}{\sqrt{pk}}\left(\frac{c_3}{\gamma}+8\sqrt{5}(1+h)\sqrt{\log(n)}\right) \right) \\
&\quad +\|\bm{\delta}^{(-i)}\|_\infty\left(2\sqrt{5}\sqrt{2d_i \log(n)} + \frac{20}{3}\log(n)\right) \\
&\leq\frac{2sd_i}{\sqrt{n}}\left(\circled{c}+ \frac{\|\bm{\pi}\|_\infty}{\sqrt{pk}}\left(\frac{c_3}{\gamma}+8\sqrt{5}(1+h)\sqrt{\log(n)}\right) \right) \\ &\quad+\left(\underset{\circled{c}}{\boxed{\|\hat{\bm{\pi}}^{(-i)}-\hat{\bm{\pi}}\|_2}}+\underset{\circled{x}}{\boxed{\|\hat{\bm{\pi}}-\bm{\pi}\|_\infty}}+\underset{\max \frac{\left|\tcircled{a}\right|'}{\tcircled{b}'}}{\boxed{\|\bm{\pi}-\bar{\bm{\pi}}^{(-i)}\|_\infty}}\right) \\
&\qquad\qquad \left(2\sqrt{10}\sqrt{d_i \log(n)} + \frac{20}{3}\log(n)\right) \\
&\awhprel{1-O(n^{-9})}{\leq}\ \frac{2sd_i}{\sqrt{n}}\left(\circled{c}+ \frac{\|\bm{\pi}\|_\infty}{\sqrt{pk}}\left(\frac{c_3}{\gamma}+8\sqrt{5}(1+h)\sqrt{\log(n)}\right) \right) \\
&\quad +\left(\circled{c}+\circled{x}+ 2 (1+h) \|\bm{\pi}\|_\infty \sqrt{\frac{80\log(n)}{npk}} \right)\\
&\qquad\qquad \left(2\sqrt{10}\sqrt{d_i \log(n)} + \frac{20}{3}\log(n)\right).
\end{align}

Next, we bound \circled{c}. From \cite[Theorem 5.1]{Chenetal2019}, we have
\begin{align}
\circled{c} &\leq \sqrt{nh} \|\hat{\bm{\pi}}^{(-i)}-\hat{\bm{\pi}}\|_{\bm{\pi}}\\
&\leq \sqrt{nh} \frac{\boxed{\|\hat{\bm{\pi}}^{{(-i)}\tran} (\hat{S}^{(-i)}-\hat{S})\|_{\bm{\pi}}}\mathrlap{\ \circled{f}}}{1-|\lambda_2(S)|-\boxed{\|\hat{S}-S\|_{\bm{\pi}}}\mathrlap{\ \circled{g}}} \\
& \sqrt{nh} \leq\frac{\circled{f}}{\gamma-\circled{g}}.
\end{align}

First, we bound \circled{g}.
From \cite[Lemma 3]{NegahbanOhShah2017}, we have
\begin{equation}
\circled{g} \leq h \|\hat{S}-S\|_2 \whprel{1-O(n^{-10})}{\leq} 80h \sqrt{\frac{\log(n)}{k d_{\text{max}}}} \quad \whprel{1-O(n^{-10})}{\leq}\  80h \sqrt{\frac{2}{c_0 k}} \leq \frac{\gamma}{2}
\end{equation}
since we assumed that $c_0 \geq 10240 h^2/\gamma^2$ and $k\geq 5$.

To bound \circled{f}, we introduce an intermediate leave-one-out matrix $\hat{S}^{(-m,\mathcal{E})}$ (cf. \cite[Lemma 5.6]{Chenetal2019}). While $\hat{S}^{(-m)}$ is independent of both the graph edges connected to node $m$ and the comparison data regarding $m$, $\hat{S}^{(-m,\mathcal{E})}$ is conditioned on the full comparison graph $\mathcal{E}$, i.e.,
\begin{equation}
    \hat{S}^{(-m,\mathcal{E})}_{ij} \triangleq \begin{cases}
        \hat{S}_{ij}=\frac{\hat{p}_{ij}}{d}, & \text{if } i \neq m \text{ and } j \neq m \text{ and } i\neq j \\
        \frac{p_{ij} A_{ij}}{d}, & \text{if (} i=m \text{ or } j=m \text{) and } i \neq j,\\
        1-\sum_{l:l\neq i} (\hat{S}^{(-m,\mathcal{E})})_{il}, & \text{if } i=j.
    \end{cases}
\end{equation}
This definition allows us to decompose \circled{f} to isolate the influence of the graph edges and the comparison data.
\begin{align}
    &\phantom{{}={}}\circled{f} \\
    &\leq \sqrt{\frac{h}{n}} \|\hat{\bm{\pi}}^{(-i)\tran} (\hat{S}^{(-i)}-\hat{S})\|_2 \\
    &\leq \sqrt{\frac{h}{n}}\left(\underset{\circled{h}}{\boxed{\|\hat{\bm{\pi}}^{(-i)\tran} (\hat{S}^{(-i)}-\hat{S}^{(-i,\mathcal{E})})\|_2}}+\underset{\circled{i}}{\boxed{\|\hat{\bm{\pi}}^{(-i)\tran} (\hat{S}^{(-i,\mathcal{E})}-\hat{S})\|_2}}\right)
\end{align}

First, we bound \circled{h}. Observe that
\begin{align}
    &\phantom{{}={}} (\hat{S}^{(-i)}-\hat{S}^{(-i,\mathcal{E})})_{jl} \\
    &= \begin{cases}
        \frac{p_{jl}(q_{jl}-A_{jl})}{d}, & \text{if (} j=i \text{ or } l=i \text{) and } j\neq l, \\
        \frac{p_{ji}(-q_{ji}+A_{ji})}{d}, & \text{if } j=l \text{ and } j\neq i, \\
        \sum_{m\neq i}\frac{p_{im}(-q_{im}+A_{im})}{d}, &\text{if } j=l=i, \\
        0, & \text{otherwise}.
    \end{cases}
\end{align}
Note that
\begin{align}
&\phantom{{}={}}(\bm{\pi}^\tran(\hat{S}^{(-i)}-\hat{S}^{(-i,\mathcal{E})}))_{j} \\
&= \begin{cases}
    \pi_j \frac{p_{ji}(-q_{ji}+A_{ji})}{d} + \pi_i \frac{p_{ij}(q_{ij}-A_{ij})}{d} = 0, & \text{if } j \neq i, \\
    \sum_{l\neq i} \pi_l \frac{p_{li}(q_{li}-A_{li})}{d} + \pi_i \sum_{l\neq i} \frac{p_{il}(-q_{il}+A_{il})}{d} = 0, & \text{if } j=i,
\end{cases}
\end{align}
thus $\hat{\bm{\pi}}^{(-i)\tran}(\hat{S}^{(-i)}-\hat{S}^{(-i,\mathcal{E})})=(\hat{\bm{\pi}}^{(-i)}-\bm{\pi})^\tran(\hat{S}^{(-i)}-\hat{S}^{(-i,\mathcal{E})})$.
Furthermore, for all $j \neq l$, since $|(\hat{S}^{(-i)}-\hat{S}^{(-i,\mathcal{E})})_{jl}| \leq \frac{1}{d}$ when $A_{jl}=1$ and $|(\hat{S}^{(-i)}-\hat{S}^{(-i,\mathcal{E})})_{jl}| \leq \frac{q_{jl}}{d}$ when $A_{jl}=0$, we have for all $j\neq i$,
\begin{equation}
\left|((\hat{\bm{\pi}}^{(-i)}-\bm{\pi})^\tran(\hat{S}^{(-i)}-\hat{S}^{(-i,\mathcal{E})}))_j\right| \leq \begin{cases}
    \frac{2}{d} \|\hat{\bm{\pi}}^{(-i)}-\bm{\pi}\|_\infty, & \text{if } A_{ji} = 1, \\
    \frac{2p}{d} \|\hat{\bm{\pi}}^{(-i)}-\bm{\pi}\|_\infty, &\text{otherwise.}
\end{cases}
\end{equation}
Finally, for $j=i$,
\begin{align}
&\phantom{{}={}} \left|((\hat{\bm{\pi}}^{(-i)}-\bm{\pi})^\tran(\hat{S}^{(-i)}-\hat{S}^{(-i,\mathcal{E})}))_i\right| \\
&= \left|\sum_{l\neq i} (\hat{\pi}^{(-i)}_l-\pi_l) \frac{p_{li}(q_{li}-A_{li})}{d} + (\hat{\pi}^{(-i)}_i-\pi_i) \sum_{l\neq i} \frac{p_{il}(-q_{il}+A_{il})}{d} \right| \\
&\leq \left|\max_{l\neq i} \frac{(\hat{\pi}^{(-i)}_l-\pi_l)p_{li}-(\hat{\pi}^{(-i)}_i-\pi_i)p_{il}}{d} \sum_{l\neq i} (q_{li}-A_{li}) \right| \\
&\leq \left|\frac{2\|\hat{\bm{\pi}}^{(-i)}-\bm{\pi}\|_\infty}{d} \sum_{l\neq i} (q_{li}-A_{li}) \right| \\
&\awhprel{1-O(n^{-10})}{\leq} \frac{2\|\hat{\bm{\pi}}^{(-i)}-\bm{\pi}\|_\infty}{d} \left(2\sqrt{5}\sqrt{\sum_j q_{ji}(1-q_{ji}) \log(n)} + \frac{20}{3}\log(n)\right) \\
&\awhprel{1-O(n^{-10})}{\leq} \frac{2\|\hat{\bm{\pi}}^{(-i)}-\bm{\pi}\|_\infty}{d} \left(2\sqrt{10}\sqrt{d_i \log(n)} + \frac{20}{3}\log(n)\right).
\end{align}

Thus, we have
\begin{align}
&\phantom{{}={}}\circled{h}\\
&\awhprel{1-O(n^{-10})}{\leq}\ \  \sqrt{\splitfrac{d_i \left(\frac{2}{d} \|\hat{\bm{\pi}}^{(-i)}-\bm{\pi}\|_\infty\right)^2 \!+ (n-d_i-1) \left(\frac{2p}{d} \|\hat{\bm{\pi}}^{(-i)}-\bm{\pi}\|_\infty\right)^2}{+\left(\frac{2\|\hat{\bm{\pi}}^{(-i)}-\bm{\pi}\|_\infty}{d} \left(2\sqrt{10}\sqrt{d_i \log(n)} + \frac{20}{3}\log(n)\right)\right)^2}} \\
&\leq \left(2\sqrt{d_i}+2p\sqrt{n}+4\sqrt{10}\sqrt{d_i \log(n)} + \frac{40}{3}\log(n)\right) \frac{\|\hat{\bm{\pi}}^{(-i)}-\bm{\pi}\|_\infty}{d}.
\end{align}

Next, we bound \circled{i}. Observe that
\begin{align}
    &\phantom{{}={}} (\hat{S}^{(-i,\mathcal{E})}-\hat{S})_{jl} \\
    &= \begin{cases}
        \frac{A_{jl}(p_{jl}-\hat{p}_{jl})}{d}, & \text{if (} j=i \text{ or } l=i \text{) and } j\neq l, \\
        \frac{A_{ji}(-p_{ji}+\hat{p}_{ji})}{d}, & \text{if } j=l \text{ and } j\neq i, \\
        \sum_{m\neq i}\frac{A_{im}(-p_{im}+\hat{p}_{im})}{d}, &\text{if } j=l=i, \\
        0, & \text{otherwise}.
    \end{cases}
\end{align}
Thus, for $j\neq i$, we have
\begin{align}
&\phantom{{}={}} \left|(\hat{\bm{\pi}}^{(-i)\tran} (\hat{S}^{(-i,\mathcal{E})}-\hat{S}))_j\right| \\
&= \left|\hat{\pi}^{(-i)}_j \frac{A_{ji}(-p_{ji}+\hat{p}_{ji})}{d} + \hat{\pi}^{(-i)}_i \frac{A_{ij}(p_{ij}-\hat{p}_{ij})}{d}\right| \\
&\leq \frac{2A_{ij} \|\hat{\bm{\pi}}^{(-i)}\|_\infty}{d} |p_{ij}- \hat{p}_{ij}| \\
&\awhprel{1-O(n^{-10})}{\leq} \frac{2A_{ij}\|\hat{\bm{\pi}}^{(-i)}\|_\infty}{d} \sqrt{\frac{5\log(n)}{k}}.
\end{align}
When $j=i$,
\begin{align}
&\phantom{{}={}} \left|(\hat{\bm{\pi}}^{(-i)\tran} (\hat{S}^{(-i,\mathcal{E})}-\hat{S}))_i\right| \\
&= \left|\sum_{j\neq i} \hat{\pi}^{(-i)}_j \frac{A_{ji}(p_{ji}-\hat{p}_{ji})}{d} + \hat{\pi}^{(-i)}_i \sum_{j\neq i} \frac{A_{ij}(-p_{ij}+\hat{p}_{ij})}{d}\right| \\
&\leq \frac{2 \|\hat{\bm{\pi}}^{(-i)}\|_\infty}{d} \left|\sum_{j\neq i} A_ij(p_{ij}-\hat{p}_{ij})\right| \\
&\awhprel{1-O(n^{-10})}{\leq} \frac{2 \|\hat{\bm{\pi}}^{(-i)}\|_\infty}{d} \sqrt{\frac{5d_i\log(n)}{k}}.
\end{align}
Thus,
\begin{align}
    &\phantom{{}={}}\circled{i} \\
    &\awhprel{1-O(n^{-9})}{\leq} \  \sqrt{d_i \left(\frac{2\|\hat{\bm{\pi}}^{(-i)}\|_\infty}{d} \sqrt{\frac{5\log(n)}{k}}\right)^2\!\! + \left(\frac{2 \|\hat{\bm{\pi}}^{(-i)}\|_\infty}{d} \sqrt{\frac{5d_i\log(n)}{k}}\right)^2}\\
    &= \frac{2\|\hat{\bm{\pi}}^{(-i)}\|_\infty}{d} \sqrt{\frac{10d_i\log(n)}{k}}.
\end{align}

Putting it all together, we have
\begin{align}
&\phantom{{}={}} \circled{c} \\
&\leq \sqrt{nh} \frac{\circled{f}}{\gamma-\circled{g}} \\
&\awhprel{1-O(n^{-9})}{\leq}\ \frac{2\sqrt{nh}}{\gamma}\circled{f} \\
&\leq \frac{2h}{\gamma} \left(\circled{h}+\circled{i}\right) \\
&\leq \frac{2h}{\gamma} \biggl( \frac{2\|\hat{\bm{\pi}}^{(-i)}\|_\infty}{d} \sqrt{\frac{10d_i\log(n)}{k}} \\
&+
\left(2\sqrt{d_i}+2p\sqrt{n}+4\sqrt{10d_i \log(n)} + \frac{40}{3}\log(n)\right) \frac{\|\hat{\bm{\pi}}^{(-i)}-\bm{\pi}\|_\infty}{d} \biggr) \\
&\awhprel{1-O(n^{-10})}{\leq}\ \  \frac{4h}{\gamma} \biggl( \|\hat{\bm{\pi}}^{(-i)}\|_\infty \sqrt{\frac{20\log(n)}{dk}} \\
&\qquad+
\left(\frac{1}{\sqrt{d}}\!+\!\frac{2}{\sqrt{n}}\!+\!\frac{4\sqrt{5}}{\sqrt{c_0}}\!+\!\frac{40}{3c_0}\right) \|\hat{\bm{\pi}}^{(-i)}\!-\bm{\pi}\|_\infty \biggr) \\
&\leq \frac{4h}{\gamma} \biggl( \sqrt{\frac{20\log(n)}{dk}}\|\bm{\pi}\|_\infty\\
&\qquad+
\left(\sqrt{\frac{20}{c_0k}}\!+\!\frac{1}{\sqrt{d}}\!+\!\frac{2}{\sqrt{n}}\!+\!\frac{4\sqrt{5}}{\sqrt{c_0}}\!+\!\frac{40}{3c_0}\right) \|\hat{\bm{\pi}}^{(-i)}\!-\bm{\pi}\|_\infty \biggr) \\
&\leq \frac{8\sqrt{5}h}{\gamma} \sqrt{\frac{\log(n)}{dk}} \|\bm{\pi}\|_\infty\\
&\qquad+ \left(\frac{1}{4\sqrt{10}}+\frac{1}{8\sqrt{5\log(n)}
}+\frac{8h}{\gamma\sqrt{n}}+ \frac{1}{2\sqrt{2}}+\frac{\gamma}{192h} \right) \|\hat{\bm{\pi}}^{(-i)}\!-\bm{\pi}\|_\infty \\
&\leq \frac{8\sqrt{5}h}{\gamma}\sqrt{\frac{\log(n)}{dk}} \|\bm{\pi}\|_\infty + \frac{1}{2} \|\hat{\bm{\pi}}^{(-i)}\!-\bm{\pi}\|_\infty \\
&\leq \frac{8\sqrt{5}h}{\gamma}\sqrt{\frac{\log(n)}{dk}} \|\bm{\pi}\|_\infty + \frac{1}{2}\left(\underset{\circled{c}}{\boxed{\|\hat{\bm{\pi}}^{(-i)}-\hat{\bm{\pi}}\|_2}}+\|\hat{\bm{\pi}}-\bm{\pi}\|_\infty\right).
\end{align}

Thus,
\begin{equation}
\circled{c} \whprel{1-O(n^{-9})}{\leq} \frac{16\sqrt{5}h}{\gamma}\sqrt{\frac{\log(n)}{dk}} \|\bm{\pi}\|_\infty+\|\hat{\bm{\pi}}-\bm{\pi}\|_\infty.
\end{equation}

Finally, combining all the intermediate results, we have
\begin{align}
&\phantom{{}={}}\circled{x} = \|\hat{\bm{\pi}}-\bm{\pi}\|_\infty \\
&= \max_i \left|\delta_i + \frac{\circled{a}}{\circled{b}}\right| \\
&\awhprel{1-O(n^{-9})}{\leq} \max_i |\delta_i| +  2(1+h)\|\bm{\pi}\|_\infty \sqrt{\frac{40\log(n)}{npk}} \\
&= \max_i \left|\frac{\circled{1}+\circled{2}+\circled{3}}{\circled{b}}\right| +  2(1+h)\|\bm{\pi}\|_\infty \sqrt{\frac{40\log(n)}{npk}} \\
&\awhprel{1-O(n^{-8})}{\leq} \max_i \frac{1}{\circled{b}} \biggg(\sqrt{d_i}\ \circled{c}\\
&\qquad+ \frac{2sd_i}{\sqrt{n}}\left(\circled{c}+ \frac{\|\bm{\pi}\|_\infty}{\sqrt{pk}}\left(\frac{c_3}{\gamma}+8\sqrt{5}(1+h)\sqrt{\log(n)}\right) \right) \\
&\qquad +\left(\circled{c}+\circled{x}+ 2 (1+h) \|\bm{\pi}\|_\infty \sqrt{\frac{80\log(n)}{npk}} \right)\\
&\qquad\qquad\qquad \left(2\sqrt{10}\sqrt{d_i \log(n)} + \frac{20}{3}\log(n)\right)\\
&\qquad + 2 \ \circled{b} \ (1+h) \|\bm{\pi}\|_\infty \sqrt{\frac{80\log(n)}{npk}} \biggg) \\
&\awhprel{1-O(n^{-9})}{\leq} 2 (1+h) \|\bm{\pi}\|_\infty \sqrt{\frac{80\log(n)}{npk}}\\
&+ 2(1+h)\max_i \biggg(\frac{\circled{c}}{\sqrt{d_i}}+ \frac{2s}{\sqrt{n}}\left(\circled{c}+ \frac{\|\bm{\pi}\|_\infty}{\sqrt{pk}}\left(\frac{c_3}{\gamma}+8\sqrt{5}(1+h)\sqrt{\log(n)}\right) \right)\\
&\qquad +\left(\circled{c}+\circled{x}+ 2 (1+h) \|\bm{\pi}\|_\infty \sqrt{\frac{80\log(n)}{npk}} \right)\\
&\qquad\qquad\qquad \frac{2\sqrt{10}\sqrt{d_i \log(n)} + \frac{20}{3}\log(n)}{d_i}\biggg) \\
&\awhprel{1-O(n^{-9})}{\leq}2 (1+h) \|\bm{\pi}\|_\infty \sqrt{\frac{80\log(n)}{npk}}\\
&+ 2(1+h)\max_i \biggg(\frac{\circled{c}}{\sqrt{d_i}}+ \frac{2s}{\sqrt{n}}\left(\circled{c}+ \frac{\|\bm{\pi}\|_\infty}{\sqrt{pk}}\left(\frac{c_3}{\gamma}+8\sqrt{5}(1+h)\sqrt{\log(n)}\right) \right)\\
&\qquad +\left(\circled{c}+\circled{x}+ 2 (1+h) \|\bm{\pi}\|_\infty \sqrt{\frac{80\log(n)}{npk}} \right)\left(\frac{4\sqrt{5}}{\sqrt{c_0}} + \frac{40}{3c_0}\right)\biggg) \\
&\leq \left(8\sqrt{5}+32\sqrt{5}(1+h)s+\frac{16\gamma}{\sqrt{5}}\right) (1+h) \|\bm{\pi}\|_\infty \sqrt{\frac{\log(n)}{npk}} \\
&\qquad + \frac{4(1+h)sc_3\|\bm{\pi}\|_\infty}{\gamma\sqrt{npk}}+\frac{2\gamma\ \circled{x}}{5}\\
&\qquad + \max_i \left( \left(\frac{2(1+h)}{\sqrt{d_i}}+\frac{4s(1+h)}{\sqrt{n}}+\frac{2\gamma}{5}\right) \circled{c}\right)\\
&\awhprel{1-O(n^{-8})}{\leq}\quad \left(8\sqrt{5}+32\sqrt{5}(1+h)s+\frac{16\gamma}{\sqrt{5}}\right) (1+h) \|\bm{\pi}\|_\infty \sqrt{\frac{\log(n)}{npk}} \\
&\qquad + \frac{4(1+h)sc_3\|\bm{\pi}\|_\infty}{\gamma\sqrt{npk}}+\frac{2\gamma\ \circled{x}}{5}\\
&\qquad +  \left(\frac{2\sqrt{2}(1+h)}{\sqrt{c_0\log(n)}}+\frac{4s(1+h)}{\sqrt{n}}+\frac{2\gamma}{5}\right)\\
&\qquad\qquad\qquad\left(\frac{16\sqrt{10}h}{\gamma}\sqrt{\frac{\log(n)}{npk}} \|\bm{\pi}\|_\infty+\circled{x}\right) \\
&=\overbracket{\left(\begin{matrix}8\sqrt{5}(1+h)+32\sqrt{5}(1+h)^2s+\frac{16(1+h)\gamma}{\sqrt{5}}\\+\frac{4(1+h)sc_3}{\gamma \log(n)}+\frac{64\sqrt{5}h(1+h)}{\gamma \sqrt{c_0\log(n)}}+\frac{64\sqrt{10}h(1+h)s}{\sqrt{n}}+\frac{64h}{\sqrt{10}}\end{matrix}\right)}^{\circled{A}} \sqrt{\frac{\log(n)}{npk}} \|\bm{\pi}\|_\infty \\
&\qquad + \underbracket{\left(\frac{4\gamma}{5}+\frac{2\sqrt{2}(1+h)}{\sqrt{c_0 \log(n)}}+\frac{4s(1+h)}{\sqrt{n}}\right)}_{\circled{B}} \ \circled{x}.
\end{align}
Observe that for sufficiently large $n$,
\begin{equation}
\circled{A} \leq \frac{1}{\gamma} \underbracket{\left((1+h)(4sc_3+64\sqrt{5}h)\right)}_{\circled{C}} + \underbracket{32\sqrt{5}(1+h)+128\sqrt{10} (1+h)^2s}_{\circled{D}},
\end{equation}
\begin{equation}
\circled{B} \leq \frac{9}{10}.
\end{equation}
Thus,
\begin{equation}
\frac{\|\hat{\bm{\pi}}-\bm{\pi}\|_\infty}{\|\bm{\pi}\|_\infty}\ \whprel{1-O(n^{-8})}{\leq}\quad \left(\frac{c_1}{\gamma} +c_2\right) \sqrt{\frac{\log(n)}{npk}}
\end{equation}
where the constants $c_1=10\circled{C}$ and $c_2=10\circled{D}$ only depend on $h$, $s$, and $c_3$ but not $n$ or $p$.
\end{proof}

\section[Proof of Theorem 4.1]{Proof of \cref{thm:error-bounds-for-weighted-rank-centrality}} \label{app:proof-error-bounds-for-weighted-rank-centrality}
\begin{proof}
First, recall that
\begin{equation}
    \hat{\pi}_i = \frac{\sum_{j: j\neq i} \hat{p}_{ji} w_{ji} \hat{\pi}_j}{\sum_{j:j\neq i} \hat{p}_{ij} w_{ij}}.
\end{equation}
As in the proof of \cref{thm:entry-wise-error-bounds-for-unweighted-rank-centrality}, we define the auxiliary vector $\bar{\bm{\pi}}$ as
\begin{equation}
    \bar{\pi}_i \triangleq \frac{\sum_{j: j\neq i} \hat{p}_{ji} w_{ji} \pi_j}{\sum_{j:j\neq i} \hat{p}_{ij} w_{ij}}.
\end{equation}

Next, we decompose the entry-wise error as
\begin{align}
\frac{\hat{\pi}_i-\pi_i}{\pi_i} &= \frac{\bar{\pi}_i - \pi_i}{\pi_i} + \frac{\hat{\pi}_i-\bar{\pi}_i}{\pi_i} \\
&\labelrel{=}{rel:weighted-dec1} \frac{\boxed{\textstyle \sum_{j:j\neq i} w_{ij} (\hat{p}_{ji} \pi_j - \hat{p}_{ij}\pi_i)}\ \mathrlap{\circled{a}}}{\boxed{\textstyle \pi_i \sum_{j:j\neq i} w_{ij} \hat{p}_{ij}}\ \mathrlap{\circled{b}}}+\boxed{\frac{\hat{\pi}_i-\bar{\pi}_i}{\pi_i}}\ \circled{c}
\end{align}
where \cref{rel:weighted-dec1} follows from $w_{ij}=w_{ji}$.

First, we upper bound the magnitude of \circled{a} with high probability.
\begin{align}
&\phantom{{}={}}\mathbb{P}\left(|\circled{a}| \leq \|\bm{\pi}\|_\infty \sqrt{\frac{40n\log(n)p}{k}}\right) \\
&\labelrel{\geq}{rel:weighted-hoeffding1} 1 - 2 \exp\left(-\frac{2\left(\|\bm{\pi}\|_\infty \sqrt{\frac{40n\log(n)p}{k}}\right)^2}{\sum_{j: j\neq i} w_{ij}^2 (\pi_i+\pi_j)^2/k}\right) \\
&\geq 1 - 2\exp \left(-\frac{20 n\log(n) p}{\sum_{j:j\neq i} w_{ij}^2}\right) \\
&\labelrel{\geq}{rel:weighted-w2sum} 1 - 2 \exp\left(-10\log(n)\right) = 1 - \frac{2}{n^{10}},
\end{align}
where \cref{rel:weighted-hoeffding1} is due to Hoeffding's inequality and \cref{rel:weighted-w2sum} is due to $\sum_{j: j\neq i} w^2_{ij} \leq \sum_{j: j\neq i} w_{ij} \leq 2np$.

Next, to lower bound \circled{b}, we first define the $\bm{\pi}$-weighted Laplacian matrix $L_{\bm{\pi}}^W$ and the corresponding quantity $d_\text{min}^{\bm{\pi}}$,
\begin{align}
    L_{\bm{\pi}}^W &\triangleq \sum_{i<j\leq n} \frac{\pi_i\pi_j}{\pi_i+\pi_j} w_{ij} (\bm{e}_i-\bm{e}_j)(\bm{e}_i-\bm{e}_j)^\tran, \\
    d_\text{min}^{\bm{\pi}} &\triangleq \min_{i\in[n]} \sum_{j=1}^n \frac{\pi_i \pi_j}{\pi_i+\pi_j} w_{ij}.
\end{align}
Note that since
\begin{equation}
    \frac{\|\bm{\pi}\|_\infty}{2h} \leq \frac{\pi_i\pi_j}{\pi_i+\pi_j} \leq \frac{\|\bm{\pi}\|_\infty}{2},
\end{equation}
we have that (cf. \cite[Section 4.1]{MonotoneAdversary})
\begin{equation}
\frac{\|\bm{\pi}\|_\infty}{2h} \lambda_{n-1}(L^W) \leq 
\lambda_{n-1}(L_{\bm{\pi}}^W) \leq \frac{\|\bm{\pi}\|_\infty}{2} \lambda_{n-1}(L^W).\label{eq:lambda-lwpi-upper}
\end{equation}
Then,
\begin{align}
\circled{b} &= \underbracket{\sum_{j=1}^n \frac{\pi_i\pi_j}{\pi_i+\pi_j} w_{ij}}_{\geq d_\text{min}^{\bm{\pi}}} + \underbracket{\pi_i \sum_{j=1}^n w_{ij} (\hat{p}_{ij}-p_{ij})}_{\circled{d}},
\end{align}
and
\begin{align}
&\phantom{{}={}}\mathbb{P}\left(\circled{d} \geq -\|\bm{\pi}\|_\infty \sqrt{\frac{10n\log(n)p}{k}} \right) \\&\geq 1 - \exp\left(-\frac{2\left(\|\bm{\pi}\|_\infty \sqrt{\frac{10n\log(n)p}{k}}\right)^2}{\pi_i^2 \sum_{j=1}^n w_{ij}^2 / k}\right) \geq 1-\frac{1}{n^{10}} \label{eq:weighted-dbound}
\end{align}
due to Hoeffding's inequality. Thus, with high probability, we have
\begin{align}
\circled{b}\  &\awhprel{1-O(n^{-10})}{\geq} d_\text{min}^{\bm{\pi}} - \|\bm{\pi}\|_\infty \sqrt{\frac{10n\log(n)p}{k}} \\
&\labelrel{\geq}{rel:weighted-k-ass} d_\text{min}^{\bm{\pi}} - \|\bm{\pi}\|_\infty \sqrt{\frac{10n\log(n)p \lambda_{n-1}(L^W)^2}{640 h^2n\log(n) p}} \\
&\labelrel{\geq}{rel:weighted-dminbound} \frac{\lambda_{n-1}(L_{\bm{\pi}}^W)}{2} - \frac{\|\bm{\pi}\|_\infty \lambda_{n-1}(L^W)}{8h} \\
&\labelrel{\geq}{rel:weighted-lambdabound}
\frac{\lambda_{n-1}(L_{\bm{\pi}}^W)}{4}
\end{align}
where \cref{rel:weighted-k-ass} follows from the $k$ assumption, \cref{rel:weighted-dminbound} is due to $\lambda_{n-1}(L_{\bm{\pi}}^W) \leq 2d_\text{min}^{\bm{\pi}}$ from \cite[Lemma 18]{MonotoneAdversary}, and \cref{rel:weighted-lambdabound} follows from \cref{eq:lambda-lwpi-upper}.

Next, we focus on \circled{c}, which we will denote with $\delta_i$. Note that
\begin{align}
\circled{c} &= \delta_i = \frac{\hat{\pi}_i-\bar{\pi}_i}{\pi_i} = \frac{\sum_j w_{ji} \hat{p}_{ji} (\hat{\pi}_j-\pi_j)}{\pi_i\sum_j w_{ij}\hat{p}_{ij}} \\
&= \frac{\sum_j w_{ji} \hat{p}_{ji} \pi_j \delta_j}{\pi_i\sum_j w_{ij}\hat{p}_{ij} } + \frac{\sum_j w_{ji} \hat{p}_{ji} (\bar{\pi}_j-\pi_j)}{\pi_i \sum_j w_{ij} \hat{p}_{ij}},
\end{align}
thus
\begin{equation}
(\pi_i \sum_j w_{ij} \hat{p}_{ij})\delta_i - \sum_{j:j\neq i} w_{ji}\hat{p}_{ji}\pi_j \delta_j = \sum_j w_{ji} \hat{p}_{ji} (\bar{\pi}_j-\pi_j).
\end{equation}
This can be re-expressed as
\begin{equation}
\hat{L}_{\bm{\pi}}^W \bm{\delta} = \bm{r}, \label{eq:weighted-Ld-r}
\end{equation}
where
\begin{align}
(\hat{L}_{\bm{\pi}}^W)_{ij} &\triangleq \begin{cases}
    -w_{ji}\hat{p}_{ji}\pi_j, & \text{if } i \neq j, \\
    \pi_i \sum_k w_{ik} \hat{p}_{ik}, & \text{otherwise},
\end{cases} \\
r_i &\triangleq \sum_j w_{ji} \hat{p}_{ji} (\bar{\pi}_j-\pi_j).
\end{align}
Note that $\mathbb{E}[\hat{L}_{\bm{\pi}}^W] = L_{\bm{\pi}}^W$ and $\hat{L}_{\bm{\pi}}^W$ is column-stochastic (but not necessarily row-stochastic, unlike $L_{\bm{\pi}}^W$). Next, we use Bernstein's inequality for matrices \cite{Tropp2015} to bound $\|\hat{L}_{\bm{\pi}}^W-L_{\bm{\pi}}^W\|_2$ with high probability. Let $r_{ijl}$ be the $l\in[k]$-th comparison result for the pair $(i, j)$ with $r_{ijl} \sim \mathsf{Bernoulli}(p_{ij})$, $r_{jil}=1-r_{ijl}$, and $\sum_{l\in[k]} r_{ijl}/k = \hat{p}_{ij}$. For all $(i, j) \in \mathcal{E}$ with $i>j$ and $l\in[k]$, define the matrix $X_{jil}$ to be
\begin{equation}
X_{jil} \triangleq \frac{1}{k} w_{ji} (r_{jil}-p_{ji}) (\bm{e}_j-\bm{e}_i) (\pi_j \bm{e}_j-\pi_i \bm{e}_i)^\tran.
\end{equation}
Observe that $\mathbb{E}[X_{jil}]=0_{n\times n}$,
\begin{align}
\|\mathbb{E} [X_{jil}^\tran X_{jil}]\|_2 &= \|\mathbb{E} [X_{jil} X_{jil}^\tran]\|_2 = \frac{2}{k^2} w_{ji}^2 p_{ji}(1-p_{ji}) (\pi_i^2+\pi_j^2) \\
&\leq \frac{w_{ji}^2\|\bm{\pi}\|_\infty^2}{k^2},
\end{align}
\begin{equation}
\|X_{jil}\|_2 = \frac{1}{k}w_{ji}|r_{jil}-p_{ji}| \sqrt{2(\pi_i^2+\pi_j^2)} \leq \frac{2 \|\bm{\pi}\|_\infty}{k},
\end{equation}
and
\begin{equation}
\sum_{(i, j)\in\mathcal{E},i>j} \sum_{l\in[k]} X_{jil} = \hat{L}_{\bm{\pi}}^W-L_{\bm{\pi}}^W.
\end{equation}

Thus,
\begin{align}
&\phantom{{}={}}\mathbb{P}\left(\|\hat{L}_{\bm{\pi}}^W-L_{\bm{\pi}}^W\|_2 \geq \|\bm{\pi}\|_\infty \sqrt{\frac{40n^2p\log(n)}{k}}\right) \\
&\leq 2n \exp \left(\frac{-\|\bm{\pi}\|_\infty^2 \frac{40n^2p\log(n)}{2k}}{\text{var}(\hat{L}_{\bm{\pi}}^W-L_{\bm{\pi}}^W)+\frac{2\|\bm{\pi}\|_\infty^2}{3k}\sqrt{\frac{n^2p\log(n)}{k}}}\right) \\
&\leq 2n \exp \left(\frac{-20n^2p\log(n)}{n^2p+\frac{2}{3} \sqrt{\frac{n^2p\log(n)}{k}}}\right) \\
&\leq 2n \exp(-10\log(n)) \leq 2n^{-9}.
\end{align}

Thus, from the assumption on $k$, we have
\begin{equation}
\|\hat{L}_{\bm{\pi}}^W-L_{\bm{\pi}}^W\|_2 \whprel{1-O(n^{-9})}{\leq}\ \  \frac{h}{n}\sqrt{\frac{40n^2p\log(n)}{k}} \leq \frac{\lambda_{n-1}(L_{\bm{\pi}}^W)}{2}.
\end{equation}

Therefore, by taking the $\ell^2$ norm on both sides of \cref{eq:weighted-Ld-r}, we get
\begin{align}
\|\bm{r}\|_2 &= \|\hat{L}_{\bm{\pi}}^W \bm{\delta}\|_2 \geq \|L_{\bm{\pi}}^W \bm{\delta}\|_2-\|\hat{L}_{\bm{\pi}}^W-L_{\bm{\pi}}^W\|_2 \|\bm{\delta}\|_2 \\
&\awhprel{1-O(n^{-9})}{\geq} \lambda_{n-1}(L_{\bm{\pi}}^W) \left(\|\bm{\delta}\|_2-\frac{|\bm{\delta}^\tran \bm{1}_n|}{\sqrt{n}}\right) - \frac{\lambda_{n-1}(L_{\bm{\pi}}^W)}{2}\|\bm{\delta}\|_2 \\
&= \lambda_{n-1}(L_{\bm{\pi}}^W) \left(\frac{\|\bm{\delta}\|_2}{2}-\frac{|\bm{\delta}^\tran \bm{1}_n|}{\sqrt{n}}\right),
\end{align}
thus
\begin{equation}
\|\bm{\delta}\|_2 \whprel{1-O(n^{-9})}{\leq} 2\left(\frac{\|\bm{r}\|_2}{\lambda_{n-1}(L_{\bm{\pi}}^W)}+\frac{|\bm{\delta}^\tran \bm{1}_n|}{\sqrt{n}}\right).
\end{equation}

Next, we bound each element of $\bm{r}$.
\begin{align}
r_i  &= \sum_j w_{ji} \hat{p}_{ji} (\bar{\pi}_j-\pi_j) \\
&= \underbracket{\sum_j w_{ji} \hat{p}_{ji} (\bar{\pi}_j-\bar{\pi}_j^{(-i)})}_{\circled{d}} + \underbracket{\sum_j w_{ji} \hat{p}_{ji} (\bar{\pi}_j^{(-i)}-\pi_j)}_{\circled{e}},
\end{align}
where $\bar{\pi}_j^{(-i)}$ is the leave-one-out version of $\bar{\pi}_j$,
\begin{equation}
\bar{\pi}_j^{(-i)} \triangleq \frac{\sum_{l \notin \{i,j\}} w_{lj} \hat{p}_{lj} \pi_l + w_{ij} p_{ij} \pi_i}{\sum_{l \notin \{i,j\}} w_{jl} \hat{p}_{jl}+w_{ji} p_{ji}}.
\end{equation}

First, note the identity
\begin{equation}
\frac{\circled{1}+\circled{2}}{\circled{3}+\circled{4}}-\frac{\circled{1}+\circled{2}'}{\circled{3}+\circled{4}'} = \frac{\circled{2}-\circled{2}'}{\circled{3}+\circled{4}} + \frac{(\circled{1}+\circled{2}')(\circled{4}'-\circled{4})}{(\circled{3}+\circled{4})(\circled{3}+\circled{4}')}.
\end{equation}
Thus,
\begin{align}
&\phantom{{}={}} \bar{\pi}_j^{(-i)} - \bar{\pi}_j \\
&= \frac{\overbracket{\sum_{l \notin \{i,j\}} w_{lj} \hat{p}_{lj} \pi_l}^{\circled{1}} + \overbracket{w_{ij} p_{ij} \pi_i}^{\circled{2}}}{\underbracket{\sum_{l \notin \{i,j\}} w_{jl} \hat{p}_{jl}}_{\circled{3}}+\underbracket{w_{ji} p_{ji}}_{\circled{4}}}-\frac{\overbracket{\sum_{l \notin \{i,j\}} w_{lj} \hat{p}_{lj} \pi_l}^{\circled{1}} + \overbracket{w_{ij} \hat{p}_{ij} \pi_i}^{\circled{2}'}}{\underbracket{\sum_{l \notin \{i,j\}} w_{jl} \hat{p}_{jl}}_{\circled{3}}+\underbracket{w_{ji} \hat{p}_{ji}}_{\circled{4}'}} \\
&= \frac{w_{ij}\pi_i(p_{ij}-\hat{p}_{ij})}{\sum_{l \notin \{i,j\}} w_{jl} \hat{p}_{jl}+w_{ji} p_{ji}}+ \bar{\pi}_j\frac{w_{ji}(\hat{p}_{ji}-p_{ji})}{\sum_{l \notin \{i,j\}} w_{jl} \hat{p}_{jl}+w_{ji} p_{ji}} \\
&= \frac{w_{ij}\pi_j(\pi_i-\pi_j+\pi_j-\bar{\pi}_j)(p_{ij}-\hat{p}_{ij})}{\underset{\circled{b}'}{\boxed{\textstyle\pi_j \sum_{l \notin \{i,j\}} w_{jl} \hat{p}_{jl}+w_{ji} p_{ji}}}} \\
&\awhprel{1-O(n^{-10})}{\leq}\quad \frac{4w_{ij}\|\bm{\pi}\|_\infty^2(p_{ij}-\hat{p}_{ij})}{\lambda_{n-1}(L_{\bm{\pi}}^W)}\left(1 + \boxed{\frac{\pi_j-\bar{\pi}_j}{\pi_j}}\right) \\
&\awhprel{1-O(n^{-10})}{\leq}\quad \frac{4w_{ij}\|\bm{\pi}\|_\infty^2(p_{ij}-\hat{p}_{ij})}{\lambda_{n-1}(L_{\bm{\pi}}^W)}\left(1+\frac{4\|\bm{\pi}\|_\infty \sqrt{\frac{40n\log(n)p}{k}}}{\lambda_{n-1}(L_{\bm{\pi}}^W)}\right) \\
&\leq \frac{16h w_{ij}\|\bm{\pi}\|_\infty(p_{ij}-\hat{p}_{ij})}{\lambda_{n-1}(L^W)},
\end{align}
where $\circled{b}'$ is a leave-one-out version of \circled{b} and can be bounded similarly.
Thus,
\begin{align}
\circled{d}\  & \awhprel{1-O(n^{-9})}{\leq}\quad \sum_j \frac{16h w_{ij}\|\bm{\pi}\|_\infty(p_{ij}-\hat{p}_{ij})}{\lambda_{n-1}(L^W)} \\
&\awhprel{1-O(n^{-10})}{\leq} \quad \frac{16h \|\bm{\pi}\|_\infty\sqrt{\frac{10n\log(n)p}{k}}}{\lambda_{n-1}(L^W)}.
\end{align}

Next,
\begin{align}
\circled{e} &= \sum_j w_{ji} \hat{p}_{ji} \frac{\sum_{l \notin \{i,j\}} w_{lj} (\hat{p}_{lj} \pi_l-\hat{p}_{jl} \pi_j)}{\sum_{l \notin \{i,j\}} w_{jl} \hat{p}_{jl}+w_{ji} p_{ji}} \\
&\awhprel{1-O(n^{-9})}{\leq}  \frac{4\|\bm{\pi}\|_\infty \overbracket{\textstyle \sum_j \sum_{l \notin \{i,j\}}w_{ji} w_{lj} (\hat{p}_{lj} \pi_l-\hat{p}_{jl} \pi_j)}^{\circled{f}}}{\lambda_{n-1}(L_{\bm{\pi}}^W)}.
\end{align}
Using Hoeffding's inequality,
\begin{align}
&\phantom{{}={}}\mathbb{P}\left(|\circled{f}| \leq \|\bm{\pi}\|_\infty np \sqrt{\frac{80\log(n)}{k}}\right) \\
&\geq 1 - 2 \exp\left(-\frac{2\left(\|\bm{\pi}\|_\infty np \sqrt{\frac{80\log(n)}{k}}\right)^2}{\sum_j \sum_{l \notin \{i,j\}} w_{ji}^2 w_{lj}^2 (\pi_l+\pi_j)^2/k}\right) \\
&\geq 1 - 2\exp \left(-\frac{40 n^2p^2\log(n) }{\sum_j \sum_{l \notin \{i,j\}} w_{ji}^2 w_{lj}^2}\right) \\
&\geq 1 - 2 \exp\left(-10\log(n)\right) = 1 - \frac{2}{n^{10}},
\end{align}

Thus,
\begin{equation}
\circled{e} \whprel{1-O(n^{-9})}{\leq} \frac{16h\|\bm{\pi}\|_\infty np \sqrt{\frac{20\log(n)}{k}}}{\lambda_{n-1}(L^W)},
\end{equation}
which implies
\begin{align}
\|\bm{r}\|_2 &\leq \sqrt{n} \|\bm{r}\|_\infty \\
&\awhprel{1-O(n^{-8})}{\leq}\sqrt{n}(1+\sqrt{2np})\frac{16h \|\bm{\pi}\|_\infty\sqrt{\frac{10n\log(n)p}{k}}}{\lambda_{n-1}(L^W)}.
\end{align}

Now we bound $|\bm{\delta}^\tran \bm{1}_n|$.
\begin{align}
\left|\sum_i \delta_i \right| &\leq hn \left|\sum_i \pi_i \delta_i\right|\\
&=hn\left|\sum_i \hat{\pi}_i-\bar{\pi}_i\right| \\
&=hn\left|\sum_i \pi_i-\bar{\pi}_i\right| \\
&\leq hn \|\bm{\pi}\|_\infty \left|\sum_i \frac{ \sum_{j:j\neq i} w_{ij} (\hat{p}_{ji} \pi_j - \hat{p}_{ij}\pi_i)}{ \pi_i\sum_{j:j\neq i} w_{ij} \hat{p}_{ij}}\right| \\
&\awhprel{1-O(n^{-9})}{\leq} 2h^2n \left| \frac{ \overbracket{\textstyle\sum_i\sum_{j:j\neq i} w_{ij} (\hat{p}_{ji} \pi_j - \hat{p}_{ij}\pi_i)}^{\circled{g}}}{ \lambda_{n-1}(L^W)}\right|,
\end{align}
and using Hoeffding's inequality,
\begin{align}
&\phantom{{}={}}\mathbb{P}\left(|\circled{g}| \leq \|\bm{\pi}\|_\infty n\sqrt{\frac{40\log(n)p}{k}}\right) \\
&\geq 1 - 2 \exp\left(-\frac{2\left(\|\bm{\pi}\|_\infty n \sqrt{\frac{40\log(n)p}{k}}\right)^2}{\sum_i\sum_{j:j\neq i} w_{ij}^2 (\pi_l+\pi_j)^2/k}\right) \\
&\geq 1 - 2\exp \left(-\frac{20 n^2p\log(n) }{\sum_i\sum_{j:j\neq i} w_{ij}^2}\right) \\
&\geq 1 - 2 \exp\left(-10\log(n)\right) = 1 - \frac{2}{n^{10}},
\end{align}
thus
\begin{equation}
|\bm{\delta}^\tran \bm{1}_n|\  \awhprel{1-O(n^{-9})}{\leq} 2n^2h^2 \|\bm{\pi}\|_\infty \frac{\sqrt{\frac{40\log(n)p}{k}}}{\lambda_{n-1}(L^W)}.
\end{equation}

Therefore,
\begin{align}
\|\bm{\delta}\|_2\  &\awhprel{1-O(n^{-9})}{\leq} 2\left(\frac{\|\bm{r}\|_2}{\lambda_{n-1}(L_{\bm{\pi}}^W)}+\frac{|\bm{\delta}^\tran \bm{1}_n|}{\sqrt{n}}\right) \\
&\leq 2\left(2h\frac{\|\bm{r}\|_2}{\|\bm{\pi}\|_\infty\lambda_{n-1}(L^W)}+\frac{|\bm{\delta}^\tran \bm{1}_n|}{\sqrt{n}}\right) \\
&\awhprel{1-O(n^{-8})}{\leq} 2\Biggl(2h\frac{16h\sqrt{n}(1+\sqrt{2np}) \sqrt{\frac{10n\log(n)p}{k}}}{\lambda_{n-1}(L^W)^2}\\
&\qquad +2nh^2 \|\bm{\pi}\|_\infty \frac{\sqrt{\frac{40n\log(n)p}{k}}}{\lambda_{n-1}(L^W)}\Biggr) \\
&\leq 4nh^2\Biggl(\frac{32p \sqrt{\frac{20n\log(n)}{k}}}{\lambda_{n-1}(L^W)^2}+ \frac{\|\bm{\pi}\|_\infty\sqrt{\frac{40n\log(n)p}{k}}}{\lambda_{n-1}(L^W)}\Biggr).
\end{align}

Next, since
\begin{align}
\delta_i &= \frac{\sum_j w_{ji} \hat{p}_{ji} \pi_j \delta_j}{\pi_i\sum_j w_{ij}\hat{p}_{ij} } + \frac{\sum_j w_{ji} \hat{p}_{ji} (\bar{\pi}_j-\pi_j)}{\pi_i \sum_j w_{ij} \hat{p}_{ij}} \\
&\awhprel{1-O(n^{-10})}{\leq}\ \ \ \frac{8h}{\|\bm{\pi}\|_\infty \lambda_{n-1}(L^W)} \left(\sqrt{\sum_j w_{ji}^2 \hat{p}_{ji}^2 \pi_j^2} \|\bm{\delta}\|_2 + r_i\right) \\
&\awhprel{1-O(n^{-8})}{\leq} \ \  \frac{8h}{\|\bm{\pi}\|_\infty \lambda_{n-1}(L^W)} \Biggl(\\
&\qquad\sqrt{np} \|\bm{\pi}\|_\infty  4nh^2\biggl(\frac{32p \sqrt{\frac{20n\log(n)}{k}}}{\lambda_{n-1}(L^W)^2}+ \frac{\|\bm{\pi}\|_\infty\sqrt{\frac{40n\log(n)p}{k}}}{\lambda_{n-1}(L^W)}\biggr) \\
&\qquad + \frac{32hnp\|\bm{\pi}\|_\infty \sqrt{\frac{20\log(n)}{k}}}{\lambda_{n-1}(L^W)}\Biggr) \\
&= \frac{1024n^2h^3p \sqrt{\frac{20\log(n)p}{k}}}{\lambda_{n-1}(L^W)^3} + \frac{32 h^4np \sqrt{\frac{40\log(n)}{k}}}{\lambda_{n-1}(L^W)^2} \\
&\qquad + \frac{256h^2np \sqrt{\frac{20\log(n)}{k}}}{\lambda_{n-1}(L^W)^2}.
\end{align}

Finally, we have
\begin{align}
&\phantom{{}={}}\frac{\|\hat{\bm{\pi}}-\bm{\pi}\|_\infty}{\|\bm{\pi}\|_\infty} \\
&\awhprel{1-O(n^{-9})}{\leq} \frac{8h \sqrt{\frac{40n\log(n)p}{k}}}{\lambda_{n-1}(L^W)} + \|\bm{\delta}\|_\infty \\
&\awhprel{1-O(n^{-7})}{\leq}\ \ \frac{\overbracket{16\sqrt{10}h}^{c_1}}{\lambda_{n-1}(L^W)}\sqrt{\frac{n\log(n)p}{k}} + \frac{\overbracket{2048\sqrt{5}h^3}^{c_3}}{\lambda_{n-1}(L^W)^3} \sqrt{\frac{n^4\log(n)p^3}{k}} \\
&\qquad +\frac{\overbracket{64\sqrt{10}h^4+512\sqrt{5}h^2}^{c_2}}{\lambda_{n-1}(L^W)^2}\sqrt{\frac{n^2\log(n)p^2}{k}},
\end{align}
completing the proof.
\end{proof}

\end{document}